\documentclass[10pt]{article}

\usepackage{graphicx}
\usepackage{multirow}
\usepackage{xcolor}         
\usepackage[labelfont=bf,textfont=it,font=footnotesize]{caption}
\usepackage{parskip}
\usepackage{times}
\usepackage{booktabs}
\usepackage{listings}
\usepackage{float}
\usepackage{subcaption}
\usepackage{titlesec}

\usepackage{amssymb}
\usepackage{amsthm}
\usepackage{amsmath}
\usepackage{nicefrac}        
\usepackage{siunitx}

\usepackage{algorithm}
\usepackage{algpseudocode}
\usepackage[numbers,sort&compress]{natbib} 
\usepackage[utf8]{inputenc} 
\usepackage[T1]{fontenc}    
\usepackage{url}            
\usepackage{microtype}      
\usepackage{enumitem}
\usepackage{tikz}
\usepackage{pgfplots}
\usepackage{pgfplotstable}
\usepackage{multirow}
\pgfplotsset{compat=newest,scaled y ticks=true} 

\usetikzlibrary{spy}

\usepackage{tabulary}
\usepackage{wrapfig}
\usepackage[margin=1in]{geometry}

\graphicspath{{Figures/}}
\DeclareGraphicsExtensions{.pdf,.png,.jpg}





\usepackage{tabulary}
\usepackage{multirow}
\usepackage{enumitem}
\usepackage{siunitx}
\usepackage{tikz} 
\usetikzlibrary{backgrounds}
\usetikzlibrary{shapes,arrows,positioning,intersections,quotes}
\usepackage{pgfplots}
\usetikzlibrary{matrix}
\usetikzlibrary{positioning}
\usetikzlibrary{backgrounds}
\pgfplotsset{compat=newest,scaled y ticks=true} 
\usepgfplotslibrary{units} 
\usepackage{csvsimple}

\usepackage{url}

\usepackage[switch,pagewise]{lineno} 
\modulolinenumbers[5]


\usepackage[pdfborder={0 0 0},colorlinks,allcolors=DarkBlue]{hyperref}
\hypersetup{
    colorlinks,
    linkcolor={blue!80!black},
    citecolor={blue!80!black},
    urlcolor={blue!80!black},
    plainpages=true
}
\newcommand{\cref}[2]{\hyperref[#2]{#1~\ref*{#2}}}
\newcommand{\colref}[3]{\hyperref[#2]{#1~\ref*{#2}{#3}}}
\newcommand{\figref}[1]{\cref{Figure}{#1}}
\newcommand{\figrefA}[2]{\colref{Figure}{#1}{#2}}
\newcommand{\secref}[1]{\cref{Section}{#1}}
\newcommand{\eqnref}[1]{\cref{Equation}{#1}}
\newcommand{\tabref}[1]{\cref{Table}{#1}}
\newcommand{\algoref}[1]{\cref{Algorithm}{#1}}
\newcommand{\thmref}[1]{\cref{Theorem}{#1}}
\newcommand{\lemref}[1]{\cref{Lemma}{#1}}
\newcommand{\remref}[1]{\cref{Remark}{#1}}
\newcommand{\email}[1]{\href{mailto:#1}{\nolinkurl{#1}}}

\titlespacing\section{0pt}{6pt plus 1pt minus 0pt}{3pt plus 1pt minus 0pt}
\titlespacing\subsection{0pt}{4pt plus 1pt minus 0pt}{3pt plus 1pt minus 0pt}
\titlespacing\subsubsection{0pt}{4pt plus 1pt minus 1pt}{3pt plus 1pt minus 1pt}
\titleformat{\section}{\large\bfseries\sffamily}{\thesection}{1em}{}
\titleformat{\subsection}{\normalsize\bfseries\sffamily}{\thesubsection}{1em}{}
\titleformat{\subsubsection}{\normalsize\bfseries\sffamily}{\thesubsubsection}{1em}{}

\DeclareGraphicsExtensions{.pdf,.png,.jpg}
\graphicspath{{figure_panels/}}

\DeclareMathOperator*{\argmin}{arg\,min}
\theoremstyle{plain}

\newtheorem{theorem}{\textbf{Theorem}}[section]
\newtheorem{corollary}{Corollary}
\newtheorem{lemma}[theorem]{\textbf{Lemma}}
\newtheorem{assumption}[theorem]{\textbf{Assumption}}

\theoremstyle{remark}
\newtheorem{remark}{\textbf{Remark}}

\newcommand{\spatialD}{D}

\newcommand{\xcomp}[1]{x_{#1}}
\newcommand{\xii}{\xcomp{i}}

\newcommand{\xvec}{\mvec{x}}

\newcommand{\partialder}[2]{\frac{\partial #1}{\partial #2}}

\newcommand{\diffusivity}{\nu}

\newcommand{\pder}[1]{\partialder{p}{\xii}}

\newcommand{\nel}{n_{el}}
\newcommand{\uh}{u^h}

\newcommand{\spaceV}{V}
\newcommand{\spaceVh}{V^h}
\newcommand{\spaceVtheta}{V_\netparams}
\newcommand{\spaceVhtheta}{V^h_\netparams}

\newcommand{\mesh}{\mathcal{K}^h}

\newcommand{\grad}{\mvec{{\nabla}}}

\newcommand{\laplacian}{\Delta}

\newcommand{\uNeural}{u_{\netparams}}

\newcommand{\uExact}{u}
\newcommand{\uNeuralExactOpt}{u_{\netparamsExact}}
\newcommand{\uDiscrete}{\uh}

\newcommand{\eGen}{e_G}
\newcommand{\eOpt}{e_{\netparams}}
\newcommand{\eCap}{e_{\mathcal{H}}}
\newcommand{\eDis}{e_h}
\newcommand{\errJ}{\mathcal{E}_J}
\newcommand{\errL}{\mathcal{E}_{\hat{L}}}

\newcommand{\mvec}[1]{{\mathbf{#1}}}

\newcommand{\setbuilder}[1]{\left\{ #1 \right\}}

\newcommand{\norm}[1]{\| #1 \|}
\newcommand{\normL}[3][2]{\| #2 \|_{L^{#1}(#3)}}

\newcommand{\normV}[1]{\| #1 \|_{\spaceVh}}

\newcommand{\normVtheta}[1]{\| #1 \|_{\spaceVtheta}}
\newcommand{\normVhtheta}[1]{\| #1 \|_{\spaceVhtheta}}

\newcommand{\half}{\frac{1}{2}}
\newcommand{\oneOver}[1]{\frac{1}{#1}}

\newcommand{\neuralMap}{G_{nn}}
\newcommand{\netparams}{\theta}
\newcommand{\netparamsExact}{\tilde{\theta}}

\newcommand{\neufenet}{\textsc{NeuFENet}}


\def\gH{{\mathcal{H}}}



\newcommand{\logLogSlopeTriangle}[5]
{
	
	\pgfplotsextra
	{
		\pgfkeysgetvalue{/pgfplots/xmin}{\xmin}
		\pgfkeysgetvalue{/pgfplots/xmax}{\xmax}
		\pgfkeysgetvalue{/pgfplots/ymin}{\ymin}
		\pgfkeysgetvalue{/pgfplots/ymax}{\ymax}
		
		\pgfmathsetmacro{\xArel}{#1}
		\pgfmathsetmacro{\yArel}{#3}
		\pgfmathsetmacro{\xBrel}{#1-#2}
		\pgfmathsetmacro{\yBrel}{\yArel}
		\pgfmathsetmacro{\xCrel}{\xArel}
		
		\pgfmathsetmacro{\lnxB}{\xmin*(1-(#1-#2))+\xmax*(#1-#2)} 
		\pgfmathsetmacro{\lnxA}{\xmin*(1-#1)+\xmax*#1} 
		\pgfmathsetmacro{\lnyA}{\ymin*(1-#3)+\ymax*#3} 
		\pgfmathsetmacro{\lnyC}{\lnyA+#4*(\lnxA-\lnxB)}
		\pgfmathsetmacro{\yCrel}{\lnyC-\ymin)/(\ymax-\ymin)} 
		
		\coordinate (A) at (rel axis cs:\xArel,\yArel);
		\coordinate (B) at (rel axis cs:\xBrel,\yBrel);
		\coordinate (C) at (rel axis cs:\xCrel,\yCrel);
		
		\draw[#5]   (A)-- node[pos=0.5,anchor=north] {1}
		(B)-- 
		(C)-- node[pos=0.5,anchor=west] {#4}
		cycle;
	}
}

\newbox\dumbox
\newcommand{\mymark}[2]{%
	\setbox\dumbox=\hbox{#2}%
	\hbox to \wd\dumbox{\hss%
		\tikz[overlay,remember picture,baseline=(#1.base)]{ \node (#1) {\box\dumbox}; }%
		\hss}%
}



\definecolor{cpu3}{HTML}{F44336}
\definecolor{cpu4}{HTML}{2196F3}
\definecolor{cpu1}{HTML}{4CAF50}
\definecolor{cpu2}{HTML}{FFC107}
\definecolor{gpu3}{HTML}{EF9A9A}
\definecolor{gpu4}{HTML}{90CAF9}
\definecolor{gpu1}{HTML}{A5D6A7}
\definecolor{gpu2}{HTML}{FFE082}

\definecolor{cpu5}{HTML}{9932CC}

\newcommand\numRowsK{3}
\newcommand\numColsK{3}
\newcommand{\K}[2]{
	\edef\Kcol##1##2##3{###2}%
	\edef\Krow##1##2##3{\noexpand\Kcol###1}%
	\Krow
	{1 0 1}
	{0 1 0}
	{1 0 1}%
}

\begin{document}

\begin{center}
{\usefont{OT1}{phv}{b}{sc}\selectfont\Large{DiffNet: Neural Field Solutions of Parametric 

Partial Differential Equations}}

{\usefont{OT1}{phv}{}{}\selectfont\normalsize
{Biswajit Khara$^1$, Aditya Balu$^1$, Ameya Joshi$^2$, Soumik Sarkar$^1$, Chinmay Hegde$^2$, Adarsh Krishnamurthy$^1$, Baskar Ganapathysubramanian$^1$*}}

{\usefont{OT1}{phv}{}{}\selectfont\normalsize
{$^1$ Iowa State University\\
$^2$ New York University\\
* Corresponding author: \email{barkarg@iastate.edu}\\
}}
\end{center}


\section*{Abstract}
We consider a mesh-based approach for training a neural network to produce field predictions of solutions to parametric partial differential equations (PDEs). This approach contrasts current approaches for ``neural PDE solvers'' that employ collocation-based methods to make point-wise predictions of solutions to PDEs. This approach has the  advantage of naturally enforcing different boundary conditions as well as ease of invoking well-developed PDE theory--including analysis of numerical stability and convergence-- to obtain capacity bounds for our proposed neural networks in discretized domains.
We explore our mesh-based strategy, called DiffNet, using a weighted Galerkin loss function based on the Finite Element Method (FEM) on a parametric elliptic PDE.
The weighted Galerkin loss (FEM loss) is similar to an energy functional that produces improved solutions, satisfies \textit{a priori} mesh convergence, and can model Dirichlet and Neumann boundary conditions. We prove theoretically, and illustrate with experiments, convergence results analogous to mesh convergence analysis deployed in finite element solutions to PDEs. These results suggest that a mesh-based neural network approach serves as a promising approach for solving parametric PDEs. 

\subsection*{Keywords}
Neural solvers |
Deep learning |
Physics informed learning |
Parametric PDE |
Data-free modeling

\section{Introduction}

Scientific machine learning is an emerging field that combines developments in machine learning with scientific computation methods. This field has witnessed a variety of approaches that deploy neural networks to solve partial differential equations (PDE). Such \emph{neural PDE solvers} provide a very different strategy of solving differential equations as compared to the traditional numerical methods; they primarily rely on \emph{optimization} techniques rather than the exact solution of systems of equations. This recent explosion in this line of work was initiated by the seminal paper on Physics Informed Neural Networks (PINNs)~\citep{raissi2019physics}.

In terms of the amount of data usage, neural PDE solvers span a wide spectrum. Some methods are ``data-driven''~\citep{rudy2019data,tompson2017accelerating,raissi2018hidden,lu2019deeponet}, where the solution to a given PDE is constructed from available experimental data or the underlying PDE is inferred from available data (commonly termed as the discovery of hidden physics). In contrast, at the other end of the spectrum are the so-called ``data-free'' methods that do not rely on input-output pairs but solely use the PDE and the boundary conditions to obtain the solution. In the past few years, many such methods have been proposed~\citep{ kharazmi2019variational, sirignano2018dgm,yang2018physics, pang2019fpinns, karumuri2020simulator, han2018solving, michoski2019solving, samaniego2020energy, ramabathiran2021spinn, lu2019deeponet,botelho2020deep}. Our work in this paper follows the latter data-free approach.

The core of neural PDE solvers are deep neural networks, which can represent arbitrarily complicated functions from the input to the output domain and, therefore, can approximate the PDE solution. Most neural methods use a pointwise prediction framework (also known as implicit neural networks~\citep{sitzmann2020implicit}). These pointwise prediction frameworks take $ \mvec{x} \in \spatialD$ (the spatial coordinates of the field) as input and produce an output solution value of $u(\mvec{x})$ (the solution field value at $\mvec{x}$) as shown in \figref{fig:pinn-mapping-sketch}. Thus, the neural PDE solvers create a mapping between the input domain $\spatialD$ to the range of the solution. Due to a pointwise prediction framework, these methods do not require a mesh and thus rely on collocating points from the domain. To take advantage of the modern stochastic gradient descent (SGD) based methods, this set of collocation points are often selected in a random or quasi-random manner~\citep{mishra2021enhancing}. The trained network approximates the discrete solution via a complicated and nonlinear mapping. This is in contrast to classical numerical PDE approaches which usually rely on a linear combination of local functions with limited differentiability (even when the exact solution may be analytic). However, such poinwise prediction neural methods do not naturally  account for the domain topology. In particular, the ``local'' nature of the solution and sparsity of matrices that emerges naturally in classical methods are missing in these neural methods. Some researchers have explored the idea of using classical methods such as finite difference methods (FDM) and finite volume methods (FVM) to construct neural architectures for solving PDEs~\citep{zhu2019physics,wen2021u,ranade2021discretizationnet}. Inspired by traditional numerical techniques, these frameworks construct a mapping between an input field and the solution field while using the discretization techniques associated with conventional numerical methods. These methods take advantage of the ``local'' nature of the solution and sparsity of matrices, similar to traditional numerical methods. In particular, mathematical concepts from finite element methods (FEM) are naturally translatable to neural networks (quadrature can be represented as convolutions), and provide interesting possibilities including variational arguments (monotone convergence to the solution), mesh convergence, basis order based convergence, and natural incorporation of boundary conditions. The current work builds upon these ideas. 

\noindent \textbf{Neural architecture:} In this paper, we develop a finite element (FEM) based neural architecture for solving PDEs. \figref{fig:diffnent-mapping-sketch} shows an abstract outline of this idea where the mapping is obtained with the use of convolutional neural networks, a specific class of network architectures specialized in learning from discrete domains such as input field $ S^d $ and the output field $ U^d $ as shown in \figref{fig:diffnent-mapping-sketch}. The nature of the input field $ S^d $ and the output field $ U^d $ (in \figref{fig:diffnent-mapping-sketch}) would depend on the actual PDE under consideration and will be made more concrete in later sections. To the best of the knowledge of the authors, no efforts have been made to develop FEM-based neural architectures. There are several benefits in developing a finite element method (FEM) based neural architecture. FEM based numerical methods are often backed by a well developed and elegant theory that connects the discretization of the domain (in terms of element/cell dimension, $h$) and the properties of the basis functions used to approximate the field (in terms of polynomial order, $\alpha$) with the quality of the ensuing numerical solution to the PDE. In particular, numerical stability arguments and \textit{a priori} error estimates allow users to judiciously reason about the accuracy, robustness, and convergence~\citep{brenner2007mathematical,larson2013finite}. Such theoretical arguments rely on the spatial discretization of the domain and properties of the basis functions in finite elements\footnote{In contrast, state-of-art neural methods allow us to use basis functions beyond polynomials/Fourier bases and approximate much more complicated mappings. Although such methods \emph{can} be analyzed theoretically, the estimates are often impractical~\citep{shin2020error, mishra2020estimates, jiao2021error}. This is a very active area of research, and we expect tighter estimates in the near future.}.

\noindent \textbf{Loss functions}: Having decided on borrowing the discretization scheme from FEM, there are multiple avenues to define the loss function. First, we need to define the spatial derivatives at each quadrature point. In FEM, this is done by directly differentiating the basis functions. Pointwise prediction methods perform this by differentiating the neural network with respect to the input variable. The differentiation process in the numerical method is straightforward and interpretable, while that is not necessarily the case in pointwise neural methods. Once the spatial derivatives are defined via the basis functions, we can either compare the weak form against the predefined basis functions from the test function space and perform a residual minimization or use an energy minimization approach. In the present work, we choose the latter, also known as the Rayleigh-Ritz (RR) method \cite{evans1998partial,reddy2010introduction}. The RR method states that the solution to a PDE must be the stationary point of some functional (i.e., ``energy'') under certain conditions. We note that the RR method has been used in a neural network for solving PDEs before \citep{yu2017deep}. However, the approach used there closely matches the pointwise prediction approach outlined in \figref{fig:pinn-mapping-sketch}, in contrast to our proposed approach.

\noindent \textbf{Boundary conditions}: The imposition of boundary conditions can also be challenging in neural methods. Very few neural methods satisfy/apply the boundary conditions exactly \citep{lee1990neural, lagaris1998artificial, malek2006numerical,sukumar2021exact}, with most methods relying on approximate approaches \citep{lagaris2000neural, raissi2019physics, sirignano2018dgm} usually by including an additional loss function corresponding to the imposed boundary conditions. It has been shown by \citet{van2020optimally} and \citet{wang2020understanding} that these losses have to be carefully weighed, making this a non-trivial exercise in hyperparameter tuning. This hyperparameter sensitivity underlines the difficulty of applying the boundary conditions in a neural network-based method (or simply neural method). Also, note that the method in \citep{yu2017deep} (using RR method) is unable to apply the Dirichlet boundary conditions precisely and consequently make use of a penalty-based approach for imposing a Dirichlet boundary condition, which we avoid altogether.

\noindent \textbf{Parametric PDEs}: Going beyond a single PDEs, there is growing interest in neural approaches that solve parametric PDEs (i.e., PDEs defined by a family of parametrized boundary conditions or coefficient fields). Most neural PDE methods have so far been limited to solving for a single instance of the PDE than a class of parametric solutions. Extending a instance PDE solver into a parametric PDE solver can greatly augment rapid design exploration, as alluded to in SimNet~\citep{hennigh2021nvidia} and \citet{wang2021long}, where the authors build a conventional implicit neural solver for parametric PDEs. 

\pagebreak

In this paper, we build upon recent efforts that train networks to predict the full-field solution~\citep{paganini2018calogan, botelho2020deep, zhu2019physics} on parametric PDEs. Our contributions are as follows:
\begin{enumerate}[topsep=0.0in,itemsep=0in]
	\item We present an algorithm that bridges traditional numerical methods with neural methods. The neural network is designed to map inputs to the discretized field solution $ u $. However, the neural network is \textit{not responsible} for ensuring the spatial differentiability of the solution. Rather, the discrete field solution relies on traditional numerical methods (and associated numerical differentiation and quadrature) to construct the loss function. Such an approach allows the natural incorporation of different boundary conditions and allows \textit{a priori} error estimates.
	\item We define the loss functions based on the Rayleigh-Ritz method coupled with the approximation scheme provided by a continuous Galerkin FEM. By defining such loss functions, we utilize function spaces with appropriate differentiability. This also account for the ``local'' nature of the solution resulting in computationally efficient loss evaluations. 
	\item We prove error convergence (similar to conventional mesh convergence) for a particular class of PDEs.
	\item We demonstrate \neufenet{}'s performance on linear Poisson equation in 2D and 3D with both Dirichlet and Neumann boundary conditions. Further, we test the parametric capability of this method on Poisson's equation, by considering a case involving stochastic diffusivity which requires access to a parameteric PDE solver.
\end{enumerate}

\begin{figure}[!t]
    \centering
	\begin{minipage}{.49\textwidth}
		\begin{tikzpicture}
		\draw (0, 0) node[inner sep=0] {\includegraphics[trim=0 0 0 0,clip,width=0.99\linewidth]{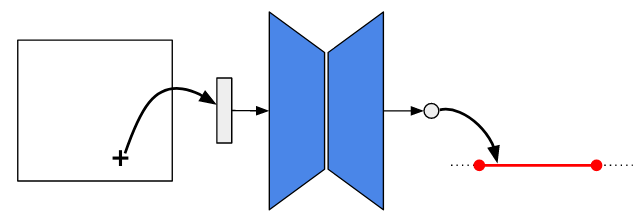}};
		\draw (-3.0, -0.3) node {$\mvec{x}_i$};
		\draw (1.3, 0.4) node {$u_{i}$};
		\draw (4.0, -0.4) node {$\mathbb{R}$};
		\draw (1.8, -1.15) node {$a$};
		\draw (3.3, -1.1) node {$b$};
		\draw (0.2, -1.8) node {$G_{nn}$};
		\end{tikzpicture}
		\subcaption{Conventional neural view: domain-to-range mapping}
		\label{fig:pinn-mapping-sketch}
	\end{minipage}	
	\hfill
	\begin{minipage}{.47\textwidth}
		\begin{tikzpicture}
		\draw (0, 0) node[inner sep=0] {\includegraphics[trim=0 0 0 0,clip,width=0.99\linewidth]{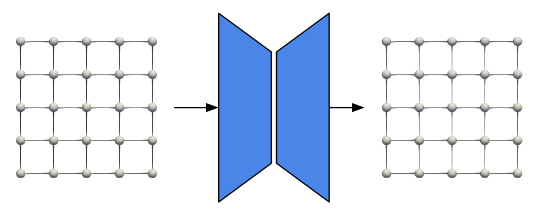}};
		\draw (-2.9, -1.8) node {$S^d$};
		\draw (2.9, -1.8) node {$U^d$};
		\draw (0.2, -1.8) node {$G_{nn}$};
		\end{tikzpicture}
		\subcaption{A more complex view: field-to-field mapping}
		\label{fig:diffnent-mapping-sketch}
	\end{minipage}	
	\caption{(a) Several neural methods (like \citep{lagaris1998artificial,raissi2019physics}) are trained to produce point predictions: $ \neuralMap : D\rightarrow \mathbb{R}$, which are easier to train, but more difficult to analyze and converge~\citep{krishnapriyan2021characterizing,wang2021understanding}, (b) In the \neufenet{} approach, we train a neural network to produce a discretized field solution over a mesh. Such an approach directly links powerful PDE analysis techniques at the cost of a larger network. The terms $ S^d $ and $ U^d $ can be considered the discrete version of any meaningful pair of input and output relevant to the PDE; these will be made precise in the next section.} 
	\label{fig:common-networks-for-pdesolve}
\end{figure}

The rest of the paper is arranged as follows: the definitions and terminologies regarding the parametric Poisson's equation are introduced in \secref{sec:preliminaries} and the mathematical formulations are described in detail in \secref{sec:formulations}. The implementation aspects of \neufenet{} are described in \secref{sec:implementations}. \secref{sec:errors} presents a theoretical analysis of the errors and finally computational results are presented in \secref{sec:results}.

\section{Mathematical Preliminaries}\label{sec:preliminaries}

Consider a bounded open (spatial) domain $D \in \mathbb{R}^n, n\geq 2$ with a Lipschitz continuous boundary $\Gamma = \partial D$. We will denote the domain variable as $\mvec{x}$, where the boldface denotes a vector or tuple of real numbers. In $ \mathbb{R}^n $, we have $ \mvec{x} = (x_1,x_2,\ldots,x_n) $; but for 2D and 3D domains, we will frequently use the notation $ \mvec{x} = (x,y) $ and $ \mvec{x} = (x,y,z) $ respectively. Consider also a probability space $ (\Omega, F, P) $, where $ \Omega $ is the sample space, $ F $ is the $ \sigma $-algebra of the subsets of $ \Omega $ and $ P $, a probability measure. We consider an abstract PDE on the function $u: D\times \Omega \rightarrow \mathbb{R}$ as:
\begin{subequations}\label{pde:abstract-representation-continuous}
	\begin{align}
	\mathcal{N}[u; s(\mvec{x},\omega)] &=f(\mvec{x}),\quad \mvec{x}\in D,\ \omega\in\Omega \label{pde:abstract-equation}\\
	\mathcal{B}[u]&=g(\mvec{x}),\quad \mvec{x}\in\Gamma. \label{pde:abstract-bc}
	\end{align}
\end{subequations}
Here, $\mathcal{N}$ is a differential operator (possibly nonlinear) operating on a function $u$. The differential equation depends on the input-data (e.g., material property) $s$ which in turn is a function of the domain variable $ \mvec{x} $ and parameter $ \omega \in \Omega$. Thus $ \mathcal{N} $ is essentially a family of PDE's parameterized by $ \omega $. \footnote{While a probability based definition of $\omega$ is not needed for defining a parameteric PDE, we choose this definition for two reasons. First, such a formulation allows easy extension to the stochastic PDE case. Second, such a formulation will allow using expectation based arguments in the analysis of convergence.} $ \mathcal{B} $ is a boundary operator on $ u $. In general, there can be multiple boundary operators for different part of the boundary $ \Gamma $.

Given a PDE along with some boundary conditions, such as the one presented in \eqnref{pde:abstract-representation-continuous}, the goal is to find a solution $ u $ that satisfies \eqnref{pde:abstract-representation-continuous} as accurately as possible. Previous works such as \citep{lagaris1998artificial,raissi2019physics,sirignano2018dgm} seek to find a pointwise mapping $u: D \rightarrow \mathbb{R}$. Here (see next section), by coupling deep neural networks with numerical methods, we explore other mappings to retrieve a discrete field solution.

In this work, we focus on the Poisson's equation considering both Dirichlet and Neumann boundary conditions, along with a heterogeneous and stochastic diffusivity:
\begin{align}\label{eq:poisson-pde}
	-\grad\cdot(\diffusivity(\mvec{x},\omega)\grad u) &= f(\mvec{x}) \text{  in  } D
\end{align}
along with the boundary conditions
\begin{align}
	u &= g \text{ on } {\Gamma_D}\label{eq:poisson-bc-dirichlet} \\
	\partialder{u}{n} &= h  \text{ on } {\Gamma_N} \label{eq:poisson-bc-neumann}
\end{align}
where $ \diffusivity $ is the \emph{permeability} (or \emph{diffusivity}) which depends on both $ \xvec $ and the random variable $ \omega $; and $ f $ is the forcing. In relation to \eqnref{pde:abstract-representation-continuous}, $ \diffusivity $ plays the role of the data $ s $. $ \Gamma_D $ and $ \Gamma_N $ are the boundaries of the domain $ D $ where Dirichlet and Neumann conditions are specified respectively. We will assume that $ \partial D = \Gamma = \Gamma_D \cup \Gamma_N$.

\subsection{Poisson's equation in heterogeneous media}\label{sec:poisson-kl-problem-intro}

We are mostly interested in the problem of a steady state mass (or heat) transfer through an inhomogeneous medium (material). This essentially means that the material has different properties at different points. The only material property appearing in the Poisson's equation (\eqnref{eq:poisson-pde}) is $ \nu(\mvec{x}) $, thus the inhomogeneity can be modeled by a spatially varying $ \nu $, i.e., $ \nu = \nu(\mvec{x}) $. The equation and the BC's are given by:
\begin{subequations}\label{eq:poisson-kl-intro}
	\begin{align}
		-\grad\cdot(\tilde{\nu}(\mvec{x})\grad u) &= 0 \text{  in  } D \\
		u(0, y) &= 1 \\
		u(1, y) &= 0 \\
		\partialder{u}{y}(x,0) &= 0 \\
		\partialder{u}{y}(x,1) &= 0
	\end{align}
\end{subequations}
where $ D $ is a hypercube domain in $ \mathbb{R}^n $, $ n = 2,3 $. The diffusivity/permeability $ \tilde{\diffusivity} $ is heterogeneous with respect to $ \xvec $ and is also parameterized by $ \omega\in\Omega $. The specific form of $ \tilde{\diffusivity} $ is given in \eqnref{def:poisson-nu-harmonic-expansion}. 

\section{Formulations}\label{sec:formulations}

\subsection{Neural approximation of the solution}\label{sec:neural-approximation-of-solution}

Instead of seeking a mapping between the domain and an interval on the real line (\figrefA{fig:common-networks-for-pdesolve}{(a)}), we seek a mapping between the input $ s $ and the full field solution $ u $ in the discrete spaces (\figrefA{fig:common-networks-for-pdesolve}{(b)}). $S^d$ denotes the discrete representation of the known quantity $s$. $S^d$ could be either available only at discrete points (perhaps from some experimental data); or in many cases, $s$ might be known in a functional form and thus $S^d$ will be simply the values of $s$  evaluated on the discrete points. Therefore, if we denote a \neufenet{} (see \figrefA{fig:common-networks-for-pdesolve}{(b)}) network by $ \neuralMap $, then $ \neuralMap $ takes as input a discrete or functional representation of $ s $ and outputs a discrete solution field $U^d_\theta$ as:
\begin{align}\label{def:neural-function-definition}
	U^d_\netparams = \neuralMap(S^d; \netparams),
\end{align}
where $\netparams$ denote the network parameters. The mathematical formulations presented in this section only assumes a suitable neural network that provides the aforementioned mapping between $ S^d $ and $ U^d $. The actual network architecture is discussed in \secref{sec:model-arch}.

An untrained network, as expected, will produce a mapping that does not satisfy the discrete PDE and will possess a large error. Our goal is to bring this error down to an acceptable level, and thereby reaching a solution that is ``close enough'' to the exact solution. We do this by designing the loss function based on major ideas taken from the classical numerical methods, as explained below.

\subsection{Loss functions inspired by variational methods}\label{sec:loss-functions}

The design of the loss function, along with the choice of the neural mapping, forms the central part of our approach. The finite element based loss function is inspired by the Galerkin formulation of an elliptic PDE as well as the Rayleigh-Ritz method. In this case, we actually construct a function of certain regularity in the domain variable $ \xvec $, as opposed to just assuming a function of certain differentiability at the collocation points.

Suppose $ H^1(D) = W^{1,2}(D) $ denotes the Sobolev space of functions whose first derivatives are square integrable. Define the space $ V $ as
\begin{align}\label{eq:def:function-space-V}
	V = \setbuilder{v\in H^1(D) : v(0,y) = 1, v(1,y) = 0, \norm{v}_{\spaceV}<\infty},
\end{align}
where $ \norm{v}_{\spaceV} $ is defined as:
\begin{align}\label{eq:def:V-norm}
	\norm{v}_{\spaceV} = \int_{\spatialD} \diffusivity(\xvec)|\grad v|^2 d\xvec.
\end{align}
Then the Galerkin formulation for the Poisson's equation presented in  \eqnref{eq:poisson-kl-intro} is to find $ u\in V $ such that,
\begin{align}\label{eq:galerkin-formulation-poisson-kl}
	B(u,w) = L(w) \ \ \forall w\in V,
\end{align}
where,
\begin{subequations}\label{eq:def:bilnear-linear-form}
	\begin{align}
		B(u,w) &= \int_{\spatialD}\diffusivity(\grad w\cdot \grad u) d\xvec \\
		L(w) &= \int_{\spatialD} wf d\xvec.
	\end{align}
\end{subequations}

\eqnref{eq:galerkin-formulation-poisson-kl} actually represents a number of equations each for a different test function $ w $ taken from the space $ \spaceV $. Thus, if $ \spaceV $ is discretized such that it can be represented by a finite basis, then the Galerkin formulation (\eqnref{eq:galerkin-formulation-poisson-kl}) yields a finite system of algebraic equations.

From the theory of variational calculus \cite{fox1987introduction, evans1998partial}, it is also known that \eqnref{eq:galerkin-formulation-poisson-kl} is the Euler-Lagrange equation of the following functional $ J(u) $ of $ u \in V $:
\begin{align} \label{eq:def:loss-functional-inner-product-form}
	J(u) = \half B(u,u) - L(u).
\end{align}
Therefore, the solution $ u $ can also be written as the minimizer of the cost function $ J $:
\begin{align}\label{eq:minimization-problem-continuous}
	u = \argmin_{u\in V} J(u).
\end{align}

In \neufenet{}, we make use of this functional $ J $, but instead of minimizing it against the solution $ u $, we minimize it against the network parameters $ \theta $. Since $ V $ is an infinite dimensional space, we need to discretize it to a finite subspace where we can evaluate $ J $. For this, let $ \mesh $ be a discretization of $ \spatialD $ into $ \nel $ finite elements $ K_i $ such that $ \cup_{\nel}K_i = D $. Then the discrete function space $ V^h $ is define as:
\begin{align}
	V^h = \setbuilder{v^h \in V: v^h|_{K}\in P_m(K), \ K\in \mesh},
\end{align}
where $ P_m(K) $ denotes the set of polynomial functions of degree $ m $ define on $ K $. Let the dimension of $ V^h $ be $ N $, which essentially means that the number of unknowns in the domain is also $ N $. Suppose $ \{ \phi_i(\xvec) \}_{i = 1}^{N} $ is a suitable basis that span $ V^h $. Then any function $ \uh \in V^h $ can be written as
\begin{align}\label{eq:def:fem-function-approximation-0}
	\uh(\xvec) = \sum_{i = 1}^{N} \phi_i(\mvec{x})U_i,
\end{align}
where $ U_i $ are the function values at the nodal points in the mesh $ \mesh $. 
Then \eqnref{eq:minimization-problem-continuous} can be rewritten for $ \uh $ as:
\begin{align}\label{eq:minimization-problem-discrete}
	\uh = \argmin_{\uh \in V^h} J(\uh).
\end{align}

This minimization of the energy functional in a finite dimensional space is commonly known as the Rayleigh-Ritz method. But in the presence of a neural network, we minimize $ J $ with respect to the network parameter $ \netparams $ instead of $ \uh $. For this, we need to first explicitly state the $ \theta $ dependence of $ \uh $. This can be done by a slight modification to \eqnref{eq:def:fem-function-approximation-0} as below:

\begin{align}\label{eq:def:fem-function-approximation-1}
	\uh(\xvec;\netparams) = \sum_{i = 1}^{N} \phi_i(\mvec{x})U_i(\theta),
\end{align}
along with a generalization of the function space $ \spaceVh $ as:
\begin{align}\label{eq:def:function-space-V-theta}
	V^{h'}_{\netparams} = \setbuilder{v(\theta)\in \spaceVh:v(x=0,y;\netparams) = 1, v(x=1,y;\netparams)=0}.
\end{align}
Then we can finally write down the \neufenet{} solution $ \uh_{\netparams^*} $ in two steps:
\begin{subequations}\label{eq:minimization-problem-theta-1}
	\begin{align}
		\netparams^* &= \argmin_{\netparams \in \Theta} J(\uh(\xvec; \netparams)) \\
		\uh_{\netparams^*} &=  \uh(\xvec; \netparams^*),
	\end{align}
\end{subequations}
where $ \uh_{\netparams^*} \in V^{h'}_{\netparams} $.

The discussion leading to \eqnref{eq:minimization-problem-theta-1} is based on a non-parametric diffusivity, i.e., $ \tilde{\diffusivity} = \diffusivity(\xvec) $. Extension of this formulation to the parametric PDE case is straightforward. Specifically, for a parameterized $ \omega $, i.e., $ \tilde{\diffusivity} = \diffusivity(\xvec,\omega) $, the function space $ V^{h'}_{\netparams}  $ is modified as:
\begin{align}\label{eq:def:function-space-V-theta-stochastic}
	V^h_\netparams = \setbuilder{v: \| v(\xvec,\omega; \netparams) \|_{V^h_{\netparams}} < \infty,\ v(x=0,y,\omega;\netparams) = 1, v(x=1,y,\omega;\netparams)=0}
\end{align}
where, $\|\cdot\|_{V^h_{\netparams}}$ is the energy norm
\begin{align}\label{eq:def:stochastic-norm}
	\|v\|_{V^h_{\netparams}}^2 = \mathbb{E}_{\omega\sim\Omega}\left[\int_D \nu(\mvec{x}, \omega; \netparams)|\nabla v|^2d\mvec{x}\right].
\end{align}
With this choice of function space, the \neufenet{} loss function can be written as:
\begin{align}\label{def:fem-loss-stochastic-expectation}
	L_{\neufenet{}}(\netparams) = \mathbb{E}_{\omega\in\Omega}J(\uh(\xvec, \omega; \netparams))
\end{align}

The right hand side of \eqnref{def:fem-loss-stochastic-expectation} involves two integrations: one over the spatial domain $ \spatialD $ and the other an expectation over $ \Omega $. The integration over $ \spatialD $ is evaluated numerically using Gaussian quadratures rules. And the expectation over $ \Omega $ is evaluated approximately by a summation over a finite number of samples, i.e.,
\begin{align}\label{def:fem-loss-stochastic-numerical}
	\hat{L}_{\neufenet{}}(\netparams) = \frac{1}{N_s}\sum_{i = 1}^{N_s}J(s(\omega_i), \uh(\xvec,\omega_i); \theta).
\end{align}

The loss function $ \hat{L}_{\neufenet{}}(\netparams) $ is now just a function of $ \netparams $ and we can minimize it with respect to $ \netparams $,
\begin{subequations}\label{eq:minimization-problem-theta-2}
	\begin{align}
		\label{eq:minimization-problem-theta-2-opt}
		\netparams^* &= \argmin_{\netparams \in \Theta} \hat{L}_{\neufenet{}}(\netparams) \\
		\label{eq:minimization-problem-theta-2-sol}
		\uh_{\netparams^*} &=  \uh(\xvec, \omega; \netparams^*).
	\end{align}
\end{subequations}

\begin{remark}\label{rem:uneural-notation}
	To simplify notations when we analyze errors in \secref{sec:errors}, we make a distinction between two representations of $ \netparams^* $: one of them is the theoretical minimum (denoted by $ \tilde{\netparams} $ in \secref{sec:errors}) and the other is the \emph{actual} set of parameters (denoted by $ \netparams $ in \secref{sec:errors}) obtained by optimizing \eqnref{eq:minimization-problem-theta-2-opt} with an optimization algorithm. This terminology then spawns two variants for $ \uh_{\netparams^*} $: $ \uNeuralExactOpt $ and $ \uNeural $ respectively.
\end{remark}

\subsection{Training algorithm for \neufenet{}}\label{sec:training-process}
We provide two versions of the training algorithm. (i) an algorithm for computing the solution for an instance of a PDE and (ii) an algorithm for approximating the solution for a parametric PDE. The model architecture and the loss function remain the same for both. For the instance version, we use a simple approach as explained in \algoref{alg:pde-instance}. While sampling from a distribution of coefficients/forcing field for a parametric PDE, we employ the mini-batch based optimization approach as explained in \algoref{alg:pde-parametric}. The sampling of the known quantities can be performed by using any random or qseudo-random sequence (see \secref{sec:results-poisson-kl-problem} for an example). For training the neural network, we predict the solution field using sampled inputs and compute the loss using the loss function derived above. We employ gradient descent based optimizers such as Adam~\citep{kingma2014adam} to perform the numerical optimization.

\begin{algorithm}[t]
	\caption{Algorithm for instance PDE solver}\label{alg:pde-instance}
	\begin{algorithmic}[1]
		\Require $ S^d $, $ \alpha $, \textsc{tol} and \texttt{max\_epoch} \Comment{$ \alpha $ = learning rate}
		\State Initialize $ \neuralMap $		
		\For{\texttt{epoch} $ \leftarrow $ 1 to \texttt{max\_epoch}}
		\State $U^d_\theta \gets G_{nn}(S^d)$ 
		\State Apply Dirichlet boundary conditions to $ U^d_\theta $
		\State  $loss = L_{\neufenet{}}(S^d, U^d_\theta)  $
		\State $ \theta \gets optimizer(\theta,\alpha,\grad_\theta(loss))$
		\If {$ loss < \textsc{tol}$}
		\State \texttt{break} 
		\EndIf
		\EndFor		 		
	\end{algorithmic}
\end{algorithm}
\begin{algorithm}[t]
	\caption{Algorithm for parametric PDE solver}\label{alg:pde-parametric}
	\begin{algorithmic}[1]
		\Require $ \{S_i^d\}_{i=1}^{N_s} $, $ \alpha $, \textsc{tol}  and \texttt{max\_epoch} \Comment{$ \alpha $ = learning rate}
		\State Initialize $ \neuralMap $		
		\For{\texttt{epoch} $ \leftarrow $ 1 to \texttt{max\_epoch}}
		\For{\texttt{mb} $ \leftarrow $ 1 to \texttt{max\_mini\_batches}}
		\State Sample $(S^d)_{mb}$ from the set $ \{S_i^d\}_{i=1}^{N_s} $
		\State $(U^d_\theta)_{mb} \gets G_{nn}\Big((S^d)_{mb}\Big)$
		\State Apply Dirichlet boundary conditions on $ (U^d_\theta)_{mb} $
		\State  $loss_{mb} = L_{\neufenet{}}((S^d_\theta)_{mb},(U^d_\theta)_{mb})  $
		\State $ \theta \gets optimizer(\theta,\alpha,\grad_\theta(loss_{mb})) $
		\EndFor
		\EndFor
	\end{algorithmic}
\end{algorithm}

\section{Implementations}\label{sec:implementations}

\begin{figure}
\centering
\begin{minipage}{0.14\textwidth}
\begin{tikzpicture}[scale=0.85]
\draw[step=2 cm] (0,0) grid (2,2);
\foreach \y in {0,2}{
\draw [fill=black](0,\y) circle (0.1cm);
\draw [fill=black](2,\y) circle (0.1cm);
}
\foreach \y in {0.423, 1.577}{
\draw [fill=red](0.423,\y) circle (0.1cm);
\draw [fill=red](1.577,\y) circle (0.1cm);
}
\draw (0.423, 1.25) node {1};
\draw (1.3,1.577) node {2};
\draw (0.7, 0.423) node {3};
\draw (1.577, 0.7) node {4};
\end{tikzpicture}
\end{minipage}
\begin{minipage}{0.35\textwidth}
\centering
\begin{tikzpicture}[scale=0.85]
\newcommand\reclenH{3}
\newcommand\reclenV{4}
\newcommand\ndiv{4}
\newcommand\stepsize{1}
\newcommand\xl{0}
\newcommand\yl{0}
\newcommand\osx{-0.5}
\newcommand\osy{0.5}
\draw[step=\stepsize cm] (\xl,\yl) grid (\xl+\reclenH,\yl+\reclenV);

\foreach \y in {0,1,2,3,4}{
\draw [fill=black](0,\y) circle (0.1cm);
\draw [fill=black](1,\y) circle (0.1cm);
\draw [fill=black](2,\y) circle (0.1cm);
\draw [fill=black](3,\y) circle (0.1cm);
}

\draw (\osx, 0) node {$e$};
\draw (\osx, 1) node {$d$};
\draw (\osx, 2) node {$c$};
\draw (\osx, 3) node {$b$};
\draw (\osx, 4) node {$a$};

\draw (0,\reclenV + \osy) node {$1$};
\draw (1,\reclenV + \osy) node {$2$};
\draw (2,\reclenV + \osy) node {$3$};
\draw (3,\reclenV + \osy) node {$4$};

\foreach \y in {0.2115, 0.7885, 1.2115, 1.7885, 2.2115, 2.7885, 3.2115, 3.7885}{
\draw [fill=red](0.2115,\y) circle (0.08cm);
\draw [fill=red](0.7885,\y) circle (0.08cm);
\draw [fill=red](1.2115,\y) circle (0.08cm);
\draw [fill=red](1.7885,\y) circle (0.08cm);
\draw [fill=red](2.2115,\y) circle (0.08cm);
\draw [fill=red](2.7885,\y) circle (0.08cm);
}

\foreach \y in {0.7885, 1.7885, 2.7885, 3.7885}{
\draw [fill=green](0.2115,\y) circle (0.08cm);
\draw [fill=green](1.2115,\y) circle (0.08cm);
\draw [fill=green](2.2115,\y) circle (0.08cm);
}

\end{tikzpicture}		
\end{minipage}	
\caption {(Left) A single 2D element in FEM, with black dots denoting ``nodes" and red dots denoting $ 2\times 2 $ Gauss quadrature points. (Right) A finite element mesh, with $ 4\times 3 $ linear elements and $ 5\times 4 $ nodes. Each of these elements contains Gauss points for integration to be performed within that element. Within each element, the ``first" quadrature point (marked ``1" on left) is marked green, and others red.}
\label{fig:fem-mesh-with-gauss-pt}
\end{figure}
																							
\begin{figure}[!htb]
\centering
\begin{tikzpicture}
\tikzset{%
parenthesized/.style={%
left delimiter  = (,
right delimiter = ),
},
node distance = 5mu,
}


\matrix[matrix of math nodes, parenthesized] (I) {
a_1 & a_2 & a_3 & a_4 \\
b_1 & b_2 & b_3 & b_4 \\
c_1 & c_2 & c_3 & c_4 \\
d_1 & d_2 & d_3 & d_4 \\
e_1 & e_2 & e_3 & e_4 \\
};
\node (*) [right = of I] {${}*{}$};

\matrix[matrix of math nodes, parenthesized] (K) [right = of *] {
N_1 & N_2 \\
N_3 & N_4 \\
};

\node (=) [right = of K] {${}={}$};

\matrix[matrix of math nodes, parenthesized] (I*K) [right = of {=}] {
\color{green}\bullet & \color{green}\bullet & \color{green}\bullet\\
\color{green}\bullet & \color{green}\bullet & \color{green}\bullet\\
\color{green}\bullet & \color{green}\bullet & \color{green}\bullet\\
\color{green}\bullet & \color{green}\bullet & \color{green}\bullet\\
};
\newcommand\rowResult{2}
\newcommand\colResult{1}

\begin{scope}[on background layer]
\newcommand{\padding}{2pt}
\coordinate (Is-nw) at ([xshift=-\padding, yshift=+\padding] I-\rowResult-\colResult.north west);
\coordinate (Is-se) at ([xshift=+\padding, yshift=-\padding] I-\the\numexpr\rowResult+\numRowsK-2\relax-\the\numexpr\colResult+\numColsK-2\relax.south east);
\coordinate (Is-sw) at (Is-nw |- Is-se);
\coordinate (Is-ne) at (Is-se |- Is-nw);

\filldraw[red,   fill opacity=.1] (Is-nw) rectangle (Is-se);
\filldraw[green, fill opacity=.1] (I*K-\rowResult-\colResult.north west) rectangle (I*K-\rowResult-\colResult.south east);

\draw[blue, dotted] 
(Is-nw) -- (K.north west)
(Is-se) -- (K.south east)
(Is-sw) -- (K.south west)
(Is-ne) -- (K.north east)
;
\draw[green, dotted] 
(I*K-\rowResult-\colResult.north west) -- (K.north west)
(I*K-\rowResult-\colResult.south east) -- (K.south east)
(I*K-\rowResult-\colResult.south west) -- (K.south west)
(I*K-\rowResult-\colResult.north east) -- (K.north east)
;

\draw[blue,  fill=blue!10!white] (K.north west) rectangle (K.south east);
\end{scope}

\tikzset{node distance=0em}
\node[below=of I] (I-label) {$ (U^d_\theta)_M $};
\node at (K |- I-label)     {$ K_{GP1} $};
\node at (I*K |- I-label)   {$ ((U^d_\theta)_{GP_1})_M $};
\end{tikzpicture}%
\caption{Quadrature quantity evaluation in FEM context. $ (U^d_\theta)_M $ is the matrix view of the nodal values.$ K_{GP1} $ is kernel containing the basis function values at ``gauss point - 1" (top left corner). This convolution results in the function values evaluated at the Gauss point ``1" of each element (marked green). $ (U^d_\theta{}_{GP1})_M $ is the matrix of this result. Function values (or their derivatives) evaluated at Gauss  points can then be used in any integral evaluation. For example, $ \int u^h dD = |J| \sum_{I\in M} \left[\sum_{i=1}^{4}(w_i (U^d_\theta)_{GPi})_M\right] $, where $ |J| $ is the transformation Jacobian for integration and $ w $ are the quadrature weights.}
\label{fig:fem-function-evaluation-at-single-gauss-pt}
\end{figure}

\begin{figure}[!htb]
	\centering
	\includegraphics[width=0.6\linewidth]{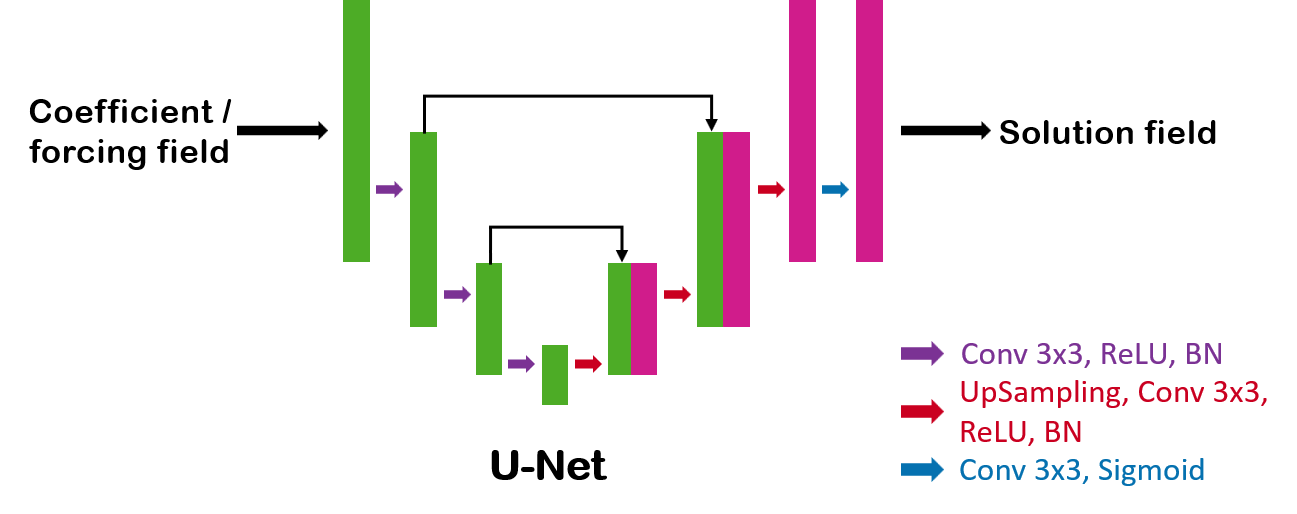}
	\caption{UNet architecture used for training \neufenet{}.}
	\label{fig:unetarch}
\end{figure}

\subsection{Model architecture for \neufenet{}}\label{sec:model-arch}
Due to the structured grid representation of $\mathcal{S}^d$ and similar structured representation of $U^d_\theta$, deep convolutional neural networks are a natural choice of network architecture. The spatial localization of convolutional neural networks helps in learning the interaction between the discrete points locally. Since the network takes an input of a discrete grid representation (similar to an image, possibly with multiple channels) and predicts an output of the solution field of a discrete grid representation (similar to an image, possibly with multiple channels), this is considered to be similar to an image segmentation or image-to-image translation task in computer vision. U-Nets~\citep{ronneberger2015u,cciccek20163d} have been known to be effective for applications such as semantic segmentation and image reconstruction. Due to its success in diverse applications, we choose U-Net architecture for \neufenet{}. The architecture of the network is shown in \figref{fig:unetarch}. First, a block of convolution, and instance normalization is applied. Then, the output is saved for later use via skip-connection. This intermediate output is then down sampled to a lower resolution for a subsequent block of convolution, instance normalization layers. This process is continued for two more times. Now, the upsampling starts where, the saved outputs of similar dimensions are concatenated with the output of upsampling for creating the skip-connections followed by a  convolution layer. LeakyReLU activation was used for all the intermediate layers. The final layer has a Sigmoid activation.

\subsection{Applying boundary conditions}\label{sec:boundary-conditions}
In \neufenet{}, the Dirichlet boundary conditions are applied exactly. The output $ U^d_\theta $ does not contain the boundary conditions. Thus, a small post-processing step is done to the network output to force the Dirichlet boundary conditions onto the respective boundaries. This can be done in a differentiable manner in modern machine learning software libraries such as PyTorch \citep{paszke2019pytorch}. This exact imposition of Dirichlet boundary conditions allow the training process to be much smoother and interpretable, because there is no penalty term involved in the loss function. Thus the loss function retains its convex nature with respect to the solution $ \uh $.

On the other hand, Neumann conditions are included in the variational form of the PDE right at the continuous level. Especially zero-Neumann conditions are exactly satisfied at the discrete level without requiring us to do anything (``do-nothing'' conditions). 

\subsection{Calculation of derivatives and integration}\label{sec:derivative-integal-calc-detail}


The full domain integration (i.e., $ \int D $) is nothing but the simple sum of the integration over the individual elements (i.e., $ \sum_{i = 1}^{N_{el}} \int D_i $). This integration over an individual element is in turn the simple weighted sum of the integrand evaluated at the Gauss quadrature points. This evaluation at a single Gauss point can be represented as convolution. Thus, if there are 4 Gauss points in each element, then 4 convolution operations will evaluate the integrand at those points for each element. After that, we only need to sum across Gauss points first, then followed by a sum across elements. See \figref{fig:fem-mesh-with-gauss-pt} and \figref{fig:fem-function-evaluation-at-single-gauss-pt} for visualized representation of this process.

\begin{remark}
	Since the result of the integration process is a scalar loss value, there is no requirement to compute, store or assemble a matrix. 
\end{remark}

\section{Error analysis} \label{sec:errors}
\subsection{Error analysis for the instance case}
We provide estimates on the errors incurred by \neufenet{} in approximating the solution. Suppose the exact solution of \eqnref{eq:poisson-kl-intro} is $ \uExact $ and the solution obtained at the end of the training process is given by $ \uNeural $ (see \remref{rem:uneural-notation}). Let us also define $ \uNeuralExactOpt $ as the best possible function in $ \spaceVtheta  $. Note that this function may or may not be able to match $ \uDiscrete $, but it represents the best possible function that the neural network function class can produce. Note also that $ \uNeuralExactOpt $ might be different than $ \uNeural $, especially if the optimization algorithm is not able to reach the optimum $ \tilde{\netparams} $. We try to bound the error $ \uNeural - \uExact $ by first breaking down the total error into errors from different sources. Theorem \ref{thm:diffnet-errors-non-parametric} is our main result for the single instance PDE version, while Thoerem \ref{thm:diffnet-errors-parametric} is our main reuslt for the parametric PDE version. 
\begin{lemma} \label{thm:opt-error-low-single-instance}
	Let $ \uNeural $ be the solution of \eqnref{eq:def:loss-functional-inner-product-form} when it is optimized by \algoref{alg:pde-instance}. Then the optimization error  $ \normV{\eOpt} = \sqrt{2\norm{\errJ}_{1}}$, where $ \eOpt = \uNeural - \uNeuralExactOpt $ and $ \errJ = J(\uNeural) - J(\uNeuralExactOpt) $.
\end{lemma}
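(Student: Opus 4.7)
The plan is to exploit the fact that $J(u) = \tfrac12 B(u,u) - L(u)$ is a quadratic functional, so a Taylor expansion of $J$ about $\uNeuralExactOpt$ is \emph{exact} up to second order. First I would write $\uNeural = \uNeuralExactOpt + \eOpt$ and expand, using symmetry and bilinearity of $B$ together with linearity of $L$, to obtain
\begin{equation*}
\errJ \;=\; J(\uNeural) - J(\uNeuralExactOpt) \;=\; \bigl[B(\uNeuralExactOpt,\eOpt) - L(\eOpt)\bigr] \;+\; \tfrac12 B(\eOpt,\eOpt).
\end{equation*}
This step is purely algebraic, relying only on the definitions \eqref{eq:def:bilnear-linear-form} and \eqref{eq:def:loss-functional-inner-product-form}.

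The second step is to kill the linear-in-$\eOpt$ term. Because $\uNeuralExactOpt = \uh(\xvec;\netparamsExact)$ realises the theoretical minimum of $J$ over the admissible class $\spaceVhtheta$ defined in \eqref{eq:def:function-space-V-theta}, the first variation of $J$ at $\uNeuralExactOpt$ must vanish along every admissible perturbation direction. Since both $\uNeural$ and $\uNeuralExactOpt$ carry the same Dirichlet data, the increment $\eOpt$ has homogeneous boundary data and may be regarded as an admissible tangent direction at $\uNeuralExactOpt$; this is the neural-network analogue of Galerkin orthogonality, and it yields $B(\uNeuralExactOpt,\eOpt) - L(\eOpt) = 0$.

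Substituting back reduces the expansion to $\errJ = \tfrac12 B(\eOpt,\eOpt) = \tfrac12 \normV{\eOpt}^{2}$ by the energy-norm definition \eqref{eq:def:V-norm}. Since $\uNeuralExactOpt$ is a minimizer, $\errJ \geq 0$, so $\norm{\errJ}_1 = \errJ$, and solving for $\normV{\eOpt}$ delivers $\normV{\eOpt} = \sqrt{2\,\norm{\errJ}_1}$.

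The principal obstacle is step two. The image $\spaceVhtheta$ of the neural-network parametrization is in general not a linear subspace, so the tangent cone at $\uNeuralExactOpt$ need not literally contain $\eOpt$; strictly, first-order optimality with respect to $\netparams$ only yields $B(\uNeuralExactOpt,\partial_{\theta_i}\uh) = L(\partial_{\theta_i}\uh)$ for each parameter direction $\partial_{\theta_i}\uh$. Making the argument fully rigorous therefore requires either an expressivity assumption — that a local linearisation of the parametrization about $\netparamsExact$ is rich enough to represent the direction $\eOpt$ — or the convention, common when porting Rayleigh–Ritz arguments to over-parameterised surrogates, of working in the linear span of admissible perturbations. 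Under either convention the orthogonality identity holds and the stated equality follows.
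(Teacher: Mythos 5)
Your proposal is correct and follows essentially the same route as the paper: expand the quadratic functional $J$ about $\uNeuralExactOpt$, observe that the linear term in $\eOpt$ vanishes, and identify the remaining quadratic term with $\tfrac12\normV{\eOpt}^2$. The obstacle you flag in your final paragraph is real and is in fact glossed over in the paper's own proof, which dismisses the cross term by asserting $\eOpt\in\spaceVh$ and citing the Galerkin relation --- a justification that implicitly presumes $\uNeuralExactOpt$ satisfies the discrete Euler--Lagrange equations against all of $\spaceVh$, i.e., exactly the linear-span/expressivity convention you spell out.
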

\begin{proof}
	\begin{align}
	\begin{split}
	J(\uNeural) &= \int_{\spatialD} \left[\half \diffusivity |\grad \uNeural |^2 - f\uNeural \right]d\xvec \\
	&= \int_{\spatialD} \left[\half \diffusivity |\grad \uNeuralExactOpt + \grad\eOpt |^2 - f(\uNeuralExactOpt + \eOpt) \right]d\xvec \ \ (\text{using}\ \eOpt=\uNeural-\uNeuralExactOpt) \\
	&= \int_{\spatialD} \left[\half \diffusivity |\grad \uNeuralExactOpt|^2 + \half \diffusivity |\grad \eOpt |^2 +\diffusivity\grad\uNeuralExactOpt\cdot\grad\eOpt - f(\uNeuralExactOpt + \eOpt) \right]d\xvec \\
	&= \int_{\spatialD}\left[ \half \diffusivity|\grad \uNeuralExactOpt |^2 - f\uNeuralExactOpt \right]d\xvec + \half\int_{\spatialD}\diffusivity|\grad\eOpt|^2 d\xvec + \int_{\spatialD}\left[ \diffusivity\grad\eOpt\cdot\grad\uNeuralExactOpt - f\eOpt \right]d\xvec \\
	&= J(\uNeuralExactOpt) + \half\normV{\eOpt}^2
	\end{split}
	\end{align}
	where in the final step, we have used the definitions of $ J $ and $ \normV{\eOpt} $ along with the fact that $ \int_{\spatialD}\left[ \diffusivity\grad\eOpt\cdot\grad\uNeuralExactOpt - f\eOpt \right]d\xvec = 0 $ since $ \eOpt \in \spaceVh$ (see \eqnref{eq:galerkin-formulation-poisson-kl} and \eqnref{eq:def:bilnear-linear-form}). Therefore, we have
	\begin{align}
		\normV{\eOpt}^2 = 2\left[J(\uNeural) - J(\uNeuralExactOpt)\right] = 2\errJ = 2\norm{\errJ}_{1},
	\end{align}
	since $ \errJ = J(\uNeural) - J(\uNeuralExactOpt) >0 $.
\end{proof}

We next describe a theorem that provides an estimate on the network capacity.
\begin{theorem}\label{thm:network-approximation-capacity}
	Fix $p \in \Omega$ and consider a PDE as defined in \eqnref{pde:abstract-representation-continuous} over a compact domain $D$ which is uniformly discretized with resolution $h$. Let $u^*_h$ be the true solution evaluated at the grid points. Consider the hypothesis class:
	\begin{align}
		\gH := \{ u_\Theta : p \mapsto \sum_{l=1}^k a^{i}_l \text{ReLU}(\langle w^{i}_l, p \rangle + b^{i}_l), i=1,\ldots,n. \}
	\end{align}
	defined as the set of all two-layer neural networks with $k$ hidden neurons equipped with ReLU activation. Then, as long as $k = \Omega(1/h)$, there exists a network in $\gH$ for which $\text{Err}_\gH = 0$.
\end{theorem}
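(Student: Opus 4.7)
The plan is to exhibit, by a direct construction, a network in $\gH$ whose output reproduces the true nodal vector $u^*_h$, thereby yielding $\text{Err}_\gH = 0$. The workhorse of the argument is the classical expressivity theorem for shallow ReLU networks (e.g., Arora et al., 2018), which states that any continuous piecewise-linear function $f:\mathbb{R}\to\mathbb{R}$ with $m$ affine pieces can be realized exactly by a two-layer ReLU network with $m-1$ hidden units. Combining this with the natural piecewise-linear structure of a finite-element discretization on a uniform grid will deliver the required width bound.

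First I would pin down the target. Because $D$ is compact and uniformly discretized at resolution $h$, the number of grid nodes along any coordinate direction scales like $1/h$, so the total nodal count $n$ is at most $O(h^{-d})$ but retains the scaling $n \sim 1/h$ in the one-dimensional direction that drives the bound. Viewing each component of the map $p \mapsto u^*_h(p)$ under the standard linear FEM basis makes it a continuous piecewise-linear function with $O(1/h)$ breakpoints. Second, I would apply the representation theorem componentwise: choosing $k = \Omega(1/h)$ hidden neurons, each expression $\sum_{l=1}^{k} a^i_l \, \text{ReLU}(\langle w^i_l, p\rangle + b^i_l)$ can be made to coincide with the target $u^*_i(p)$. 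Assembling the $n$ scalar outputs into a single two-layer architecture (sharing the hidden layer and using distinct top-layer weights $a^i_l$) produces a network in $\gH$ with zero error.

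The main obstacle I anticipate is the careful interpretation of $\text{Err}_\gH$ in conjunction with the regularity of the solution map. Taken literally, with $p$ fixed and treated as a constant input, any $k \ge 1$ trivially suffices, because each scalar output has independent free parameters; the $\Omega(1/h)$ scaling only becomes meaningful once exactness is read in a functional sense over $p \in \Omega$, which is the intended reading. Reconciling this with the linear-in-$p$ argument to the ReLU means one must either (i) exploit the piecewise-affine dependence of the FEM solution on the coefficient/forcing parameters, or (ii) invoke an approximation-theoretic argument tying the grid resolution $h$ to the number of breakpoints needed to resolve the target. Once this correspondence is fixed, the Arora-style construction finishes the proof with essentially no further work.
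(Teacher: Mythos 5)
Your construction hinges on the claim that the target map $p \mapsto u^*_h(p)$ is a continuous piecewise-linear function of $p$ with $O(1/h)$ breakpoints, so that the exact-representation theorem for shallow ReLU networks can be applied componentwise. That claim conflates the two roles the discretization plays: the piecewise-linear structure of the FEM approximation lives in the spatial variable $\mvec{x}$ (which is where the $O(1/h)$ count of nodes and elements comes from), whereas the input to the networks in $\gH$ is the parameter $p$. For the problems treated in the paper the discrete solution operator has the form $U = K(p)^{-1}F$ with a stiffness matrix $K(p)$ depending on the diffusivity parameters, so the dependence on $p$ is rational rather than piecewise affine; your fallback (i) is therefore unavailable except in degenerate cases (e.g., linear dependence on the forcing alone), and your fallback (ii) yields only approximation, which cannot deliver the exactness $\text{Err}_\gH = 0$ asserted in the statement. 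As written, the Arora-style argument does not close the proof.

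The paper takes a different and more modest route: it reads the statement at the fixed $p$ (precisely the reading you flagged as apparently trivial) and argues via a memorization-style dimension count following Bubeck et al. The vectors in $\mathbb{R}^n$ obtained by evaluating single ReLU neurons at the fixed $p$ span a subspace of dimension $m \le n$, hence no more than $n$ neurons are needed to reproduce the target nodal vector $u^*_h$, and the scaling $n \lesssim 1/h$ of the output resolution gives the stated width; the construction is non-constructive and says nothing about the functional dependence on $p$. To salvage your version you would need either to restrict to parametrizations under which the solution map is genuinely piecewise affine in $p$, or to abandon the functional reading and adopt the fixed-$p$ linear-algebra argument.
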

\begin{proof}
	The proof follows a recent result by \citet{bubeck2020-memorization}. Let the output resolution of the network be $n$. For a fixed $p$, consider the (linear) vector space spanned by all possible (perhaps uncountably many) basis functions of the form
	$$
	f(\cdot) = a^i_l \text{ReLU}(\langle w^i_l, \cdot \rangle + b_l)
	$$
	where $a,b,w$ are arbitrary real-valued weights. Since $i = 1, \ldots, n$, the span of this space is no more than $n$-dimensional and isomorphic to $\mathbb{R}^m$ for some $m \leq n$. Therefore, there is a set of no more than $n$ basis functions (i.e., $n$ neurons) that can be used to represent $u^*_{p}$ any fixed $p$. Assuming the dimension of $p$ is small, we have $n \lesssim 1/h$. Therefore, $k = \Omega(1/h)$ neurons are sufficient to reproduce $u^*_p$.
\end{proof}

Notice that the above theorem shows that there exist \neufenet{} architectures that exactly drive the modeling error down to zero. However, the proof is \emph{non-constructive}, and there is no obvious algorithm to find the basis functions that reproduce the solution at the evaluation points. \thmref{thm:network-approximation-capacity} essentially allows us to choose the neural network parameter family $ \Theta $ such that the modeling error $ \eCap $ is low. Since, we are free to choose the network architecture, we can always assume ( and \textit{a posteriori} confirm) that
\begin{align}
	\norm{\eCap} = \norm{\uNeuralExactOpt - \uDiscrete} \leq \epsilon, \ \ \epsilon>0.
\end{align}
\begin{remark}
	\neufenet{} is designed to be agnostic to a neural network. Therefore, both fully connected neural network as well as convolutional neural networks can be used for the network approximation. Since a convolutional neural network can be interpreted as a special case of a fully connected network with sparse weights \cite{lecun1998gradient}, the above estimate still holds.
\end{remark}

For the third source of error, i.e., the error due to discretization using finite element method can be estimated from standard finite element analysis literature. We start with the following assumption:
\begin{assumption}\label{assump:discretization}
	Assume that the spatial domain $ \spatialD $ is discretized by a mesh $ \mesh $ that consists of hyperrectangular elements. Each element $ K \in \mesh$ has a bounded radius, i.e., $ 0 < h_{min} \leq r(K) \leq h_{max} < \infty $. We define the mesh length $ h = \min \{r(K)\}_{i = 1}^{n_{el}} $
\end{assumption}
\begin{lemma}
	The exact solution to \eqnref{eq:galerkin-formulation-poisson-kl}, $ u \in  H^2(\spatialD) $.
\end{lemma}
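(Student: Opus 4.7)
The plan is to establish $H^2$-regularity in three stages: secure a unique weak solution in $H^1(\spatialD)$ via Lax--Milgram, upgrade to $H^2_{\mathrm{loc}}$ by interior elliptic regularity, and then extend up to the boundary using the special geometry of the hypercube together with the structure of the mixed Dirichlet--Neumann conditions in \eqref{eq:poisson-kl-intro}.

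\emph{Weak solvability.} Under the standing assumptions $0 < \diffusivity_{\min} \le \diffusivity(\xvec) \le \diffusivity_{\max} < \infty$ a.e.\ and $f \in L^2(\spatialD)$, I would first lift the inhomogeneous Dirichlet datum $u(0,y)=1$ to a fixed $u_g \in H^1(\spatialD)$, so that $w = u - u_g$ satisfies a variational problem on the linear subspace $V_0 \subset H^1(\spatialD)$ of functions vanishing on $\Gamma_D$. Coercivity of $B(\cdot,\cdot)$ on $V_0$ follows from the lower bound on $\diffusivity$ together with Poincar\'e's inequality (valid because $\Gamma_D$ has positive surface measure), while continuity of $B$ and $L$ is immediate. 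Lax--Milgram then yields a unique $u \in V \subset H^1(\spatialD)$.

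\emph{Interior and face regularity.} Assuming $\diffusivity \in W^{1,\infty}(\spatialD)$---a property consistent with the truncated harmonic expansion of $\tilde{\diffusivity}$ referred to in \eqref{def:poisson-nu-harmonic-expansion}---the equation can be rewritten in nondivergence form as $-\diffusivity\,\Delta u - \grad\diffusivity\cdot\grad u = f$, a uniformly elliptic equation with Lipschitz coefficients. Nirenberg's difference-quotient technique then yields $u \in H^2_{\mathrm{loc}}(\spatialD)$ with the standard estimate $\seminormH[2]{u}{\spatialD'} \lesssim \normL{f}{\spatialD} + \normH{u}{\spatialD}$ on compact subsets $\spatialD'$. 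Since $\spatialD = (0,1)^n$ has piecewise flat boundary and each open face carries either a pure Dirichlet or a pure Neumann condition, classical flattening-and-reflection arguments upgrade this interior estimate to $H^2$-regularity up to each face.

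The main obstacle is the analysis at the corners and (in 3D) edges where $\Gamma_D$ and $\Gamma_N$ meet, since $H^2$-regularity can fail at such junctions when the interior angle is too large, and so one cannot simply quote a blanket polytope-regularity theorem. The geometry of \eqref{eq:poisson-kl-intro} is, however, favorable: Dirichlet and Neumann faces meet at right angles, and a Kondratiev-type (or separation-of-variables) local analysis shows that the leading corner singular exponent is $\pi/(2\cdot\pi/2)=1$, so the singular profile behaves like $r$ and has square-integrable second derivatives. Combining this local control at the singular strata with the interior and face estimates then delivers $u \in H^2(\spatialD)$.
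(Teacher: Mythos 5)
Your proposal is correct in outline and takes a genuinely different---and substantially more complete---route than the paper. The paper's entire proof is a citation of Evans, Sec.~6.3, Theorem~1, together with the observations that $\diffusivity(\cdot\,;\omega)\in C(\spatialD)$ and $f=0\in L^2(\spatialD)$. That theorem is an \emph{interior} regularity result: it yields only $u\in H^2_{\mathrm{loc}}(\spatialD)$, and it requires $C^1$ coefficients, so continuity of $\diffusivity$ alone does not suffice even for that local conclusion (the diffusivity in \eqnref{def:poisson-nu-harmonic-expansion} is in fact smooth, so your $W^{1,\infty}$ hypothesis is the right one to make explicit). Your three-stage argument---Lax--Milgram after lifting the Dirichlet datum, difference quotients for $H^2_{\mathrm{loc}}$, then flattening/reflection on the faces and a Kondratiev-type analysis at the Dirichlet--Neumann junctions---is what is actually needed to reach global $H^2(\spatialD)$ on a hypercube with mixed boundary conditions, and global regularity is what the lemma is used for downstream (the constant in \lemref{thm:discretization-error} depends on $|u|_{H^2}$ over all of $\spatialD$). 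What the paper's citation buys is brevity; what yours buys is an argument that actually covers the boundary, which is where the difficulty lives.

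Two caveats on your corner step, neither of which changes the conclusion. First, the justification ``the singular profile behaves like $r$ and has square-integrable second derivatives'' is not right as stated: in 2D a profile $r^{\lambda}\Phi(\theta)$ has second derivatives of order $r^{\lambda-2}$, and $\lambda=1$ is exactly the borderline exponent at which $\int_0^1 r^{2(\lambda-2)}\,r\,dr$ diverges logarithmically. The correct reason the right-angle Dirichlet--Neumann corner is harmless is that the admissible exponents are $\lambda_k=(2k-1)\pi/(2\omega)=2k-1$ for $\omega=\pi/2$, the leading ``singular function'' $r\sin\theta=y$ is a harmonic polynomial and hence contributes nothing singular, and the next exponent is $3>1$. (The variable coefficient $\diffusivity$ enters only as a lower-order perturbation after freezing it at the corner, which you implicitly assume and should say.) Second, in 3D you still owe a separate treatment of the edges and vertices of $[0,1]^3$ where Dirichlet and Neumann faces meet; the right-angle geometry again gives integer edge exponents and no sub-$H^2$ vertex singularities, but this should be stated rather than folded into ``corners and (in 3D) edges.''
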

\begin{proof}
	In \eqnref{eq:galerkin-formulation-poisson-kl}, the diffusivity $ \diffusivity(\xvec; \omega) \in C(\spatialD)$ for any fixed $ \omega \in \Omega $. Furthermore, the forcing function $ f = 0 \in L^2(\spatialD) $. Using results from regularity theory (such as \citep{evans1998partial}, Sec. 6.3 Theorem 1), we conclude $ u\in H^2(\spatialD) $.
\end{proof}
\begin{lemma}\label{thm:discretization-error}
	Let \cref{Assumption}{assump:discretization} hold. Further assume that the basis functions $ \phi_i(\xvec) $ in \eqnref{eq:def:fem-function-approximation-0} are chosen such that $ \phi_i(\xvec)\in C^{\alpha}(K) $ ($ \alpha\geq 1 $) locally within each element $ K\in\mesh $. Then,
	\begin{align}
		\norm{\uDiscrete - \uExact}_{L^2(\spatialD)} \leq C_d h^{\alpha+1},
	\end{align}
	where $ C_d = C_d(D, |u|_{H^2}) $ is a constant.
\end{lemma}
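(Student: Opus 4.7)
The plan is to follow the classical a priori error analysis pipeline for conforming Galerkin FEM, combining (i) Galerkin orthogonality / Céa's lemma, (ii) local polynomial interpolation estimates (Bramble--Hilbert), and (iii) an Aubin--Nitsche duality argument to convert the energy-norm bound into the sharper $L^2$ bound.

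First I would exploit Galerkin orthogonality. Since $\uDiscrete$ solves $B(\uDiscrete, w^h) = L(w^h)$ for every $w^h \in \spaceVh \subset V$, and $\uExact$ satisfies $B(\uExact, w) = L(w)$ for every $w \in V$, subtraction yields $B(\uDiscrete - \uExact, w^h) = 0$ for all $w^h \in \spaceVh$. Using the assumed two-sided bounds $0 < \nu_{\min} \le \nu(\xvec) \le \nu_{\max} < \infty$ (which render $B$ continuous and coercive on $V$), Céa's lemma gives the quasi-optimality estimate
\begin{align*}
\normV{\uDiscrete - \uExact} \;\le\; \sqrt{\nu_{\max}/\nu_{\min}}\; \inf_{v^h \in \spaceVh} \normV{v^h - \uExact}.
\end{align*}

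Next I would bound the infimum by plugging in the standard nodal interpolant $\interpolant \uExact \in \spaceVh$. By \cref{Assumption}{assump:discretization} and the polynomial degree-$\alpha$ local structure of the basis, a Bramble--Hilbert/Deny--Lions argument on the reference element, combined with the usual scaling from the reference to a physical element of size $h$, gives the elementwise estimate $|\uExact - \interpolant \uExact|_{H^1(K)} \le C\, h^{\alpha}\, |\uExact|_{H^{\alpha+1}(K)}$, and hence after summing over $K\in \mesh$,
\begin{align*}
\normV{\interpolant \uExact - \uExact} \;\le\; C\, h^{\alpha}\, |\uExact|_{H^{\alpha+1}(\spatialD)}.
\end{align*}
Combined with Céa's lemma this already delivers an $H^1$ (energy) error bound of order $h^{\alpha}$.

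To reach the sharper $L^2$ rate $h^{\alpha+1}$, I would now invoke the Aubin--Nitsche duality trick. Let $\psi \in V$ solve the adjoint problem $-\grad\!\cdot(\nu \grad \psi) = \uDiscrete - \uExact$ with homogeneous Dirichlet data; elliptic regularity (same reference as in the preceding lemma) gives $\psi \in H^2(\spatialD)$ with $|\psi|_{H^2} \le C \|\uDiscrete - \uExact\|_{L^2(\spatialD)}$. Using Galerkin orthogonality once more to subtract $\interpolant \psi$, applying continuity of $B$, and using the $\alpha=1$ interpolation estimate for $\psi$,
\begin{align*}
\|\uDiscrete - \uExact\|_{L^2(\spatialD)}^2
= B(\uDiscrete - \uExact,\; \psi - \interpolant \psi)
\;\le\; C\, \normV{\uDiscrete - \uExact}\, h\, |\psi|_{H^2(\spatialD)}
\;\le\; C\, h^{\alpha+1}\, |\uExact|_{H^{\alpha+1}} \,\|\uDiscrete - \uExact\|_{L^2(\spatialD)},
\end{align*}
and dividing through yields the claim with $C_d = C_d(D, |\uExact|_{H^{\alpha+1}})$ (which reduces to $C_d(D,|\uExact|_{H^2})$ in the linear-element case $\alpha=1$).

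I expect two main obstacles. The first is genuinely technical: the $\alpha+1$ rate in $L^2$ requires $\uExact \in H^{\alpha+1}(\spatialD)$, but the preceding regularity lemma only asserts $\uExact \in H^2(\spatialD)$. This is fine for $\alpha = 1$, but for higher order the statement implicitly assumes a smoother coefficient $\nu$ and a sufficiently smooth domain so that elliptic regularity can be boosted; a clean treatment would either restrict to $\alpha = 1$ or add an explicit regularity hypothesis. The second is the dependence of the constants on the heterogeneity: both Céa's lemma and the Aubin--Nitsche step pick up factors of $\nu_{\max}/\nu_{\min}$, so bounded-coefficient assumptions must be invoked to keep $C_d$ finite and $\omega$-independent (the latter being important for the parametric extension in the following section).
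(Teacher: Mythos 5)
The paper does not actually write out a proof of this lemma; it simply points to Oden (Sec.~8.6, Thm.~8.5) and Brenner--Scott (Sec.~5.7), and the argument you give---Galerkin orthogonality, C\'ea's lemma, Bramble--Hilbert interpolation estimates, and the Aubin--Nitsche duality step---is precisely the standard proof contained in those references, so your proposal is correct and follows the same route the paper intends. Your side observation that the rate $h^{\alpha+1}$ with a constant depending only on $|u|_{H^2}$ is really only justified for $\alpha=1$, and that higher-order rates would require $u\in H^{\alpha+1}(\spatialD)$ and hence stronger regularity hypotheses than the preceding lemma provides, is a legitimate caveat about the lemma as stated rather than a gap in your argument.
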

\begin{proof}
	We refer to standard texts such as \citet{oden2012introduction}(Sec. 8.6, Theorem 8.5) or  \citet{brenner2007mathematical}(Sec. 5.7) for the proof.
\end{proof}

Finally we can write down the estimate for the generalization error in \neufenet{} in the form of the following theorem.
\begin{theorem} \label{thm:diffnet-errors-non-parametric}
	Consider any \neufenet{} architecture producing fields with grid spacing $h$. Let $\gH$ denote the hypothesis class of all networks obeying that architecture. Suppose that $\Theta \in \gH$ is a \neufenet{} trained using the loss function $ J $ defined in \eqnref{eq:minimization-problem-continuous}. Then, its generalization error obeys:
	\begin{align} \label{eq:error-bound-non-parametric}
	\normVtheta{\eGen} \leq \text{Err}_\Theta + \text{Err}_\gH + O(h^{\alpha+1})
	\end{align}
	where $\text{Err}_\Theta$ is a term that only depends on the \neufenet{} optimization procedure and $\text{Err}_\gH$ only depends on the choice of hypothesis class $\gH$. The $ \alpha $ in the third term is the local degree of the basis functions as in \lemref{thm:discretization-error}.
\end{theorem}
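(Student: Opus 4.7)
The natural strategy is a three-way error decomposition: write
\begin{align*}
\eGen \;=\; \uNeural - \uExact \;=\; \underbrace{(\uNeural - \uNeuralExactOpt)}_{\eOpt} \;+\; \underbrace{(\uNeuralExactOpt - \uDiscrete)}_{\eCap} \;+\; \underbrace{(\uDiscrete - \uExact)}_{\eDis},
\end{align*}
and bound each summand separately using a result already established in the excerpt. The three terms correspond precisely to the three sources of error identified just before the theorem: the optimizer's failure to reach the minimizer within the hypothesis class, the hypothesis class's failure to contain the discrete FEM solution, and the FEM discretization's failure to represent the continuous solution. I would then apply the triangle inequality in the $\spaceVtheta$-norm to get the additive bound in \eqnref{eq:error-bound-non-parametric}.

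The first step is to invoke Lemma \ref{thm:opt-error-low-single-instance} to get $\normV{\eOpt} = \sqrt{2\,\errJ}$, which is the purely training-dependent quantity; I would package this as $\text{Err}_\Theta$, since it is entirely determined by how close the optimizer drives $J(\uNeural)$ to $J(\uNeuralExactOpt)$ and is independent of $h$ and of the choice of architecture family. Second, I would apply the network approximation result of Theorem \ref{thm:network-approximation-capacity} together with the capacity remark that follows it, which guarantees a constant $\epsilon$ such that $\normVtheta{\eCap} \leq \epsilon$; this becomes $\text{Err}_\gH$. Third, I would apply Lemma \ref{thm:discretization-error} to obtain $\normL{\eDis}{\spatialD} \leq C_d h^{\alpha+1}$, which yields the $O(h^{\alpha+1})$ contribution. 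Summing via the triangle inequality completes the proof.

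The main obstacle is a norm-compatibility issue that needs to be handled carefully. Lemma \ref{thm:opt-error-low-single-instance} produces a bound in the energy norm $\normV{\cdot}$, Theorem \ref{thm:network-approximation-capacity} is stated in a generic norm $\norm{\cdot}$, and Lemma \ref{thm:discretization-error} delivers an $L^2$ bound, while the claimed inequality is stated in $\normVtheta{\cdot}$. I would reconcile this by noting that the energy norm $\normV{\cdot}$ and $\normVtheta{\cdot}$ coincide on the relevant discrete subspace (both are defined through the same weighted Dirichlet form involving $\diffusivity$), and by appealing to the bounded diffusivity assumption so that the energy norm is equivalent to the $H^1$ seminorm, with $L^2$ control following from Poincar\'e together with the homogeneous Dirichlet component. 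Under this interpretation the three bounds can be combined cleanly; if instead one wants the bound to hold in the native energy norm for $\eDis$, the FEM discretization term would need to be weakened to $O(h^{\alpha})$ rather than $O(h^{\alpha+1})$ via the standard C\'ea/Aubin–Nitsche dichotomy, and I would flag this as the one place where the constants in Lemma \ref{thm:discretization-error} silently absorb any equivalence factor between $\normV{\cdot}$ and $\normL{\cdot}{\spatialD}$.
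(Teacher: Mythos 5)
Your proposal matches the paper's proof essentially verbatim: the same three-way decomposition $\eGen = \eOpt + \eCap + \eDis$, the triangle inequality in the $\spaceVtheta$-norm, and the same three supporting results (\lemref{thm:opt-error-low-single-instance} for $\text{Err}_\Theta$, \thmref{thm:network-approximation-capacity} for $\text{Err}_\gH$, and \lemref{thm:discretization-error} for the $O(h^{\alpha+1})$ term). You are in fact slightly more careful than the paper on the one delicate point: the paper silently equates the $\spaceVtheta$-norm of $\eDis$ with the $L^2$ rate $O(h^{\alpha+1})$ from \lemref{thm:discretization-error}, which is exactly the norm-compatibility issue you flag (and which, as you note, would degrade to $O(h^{\alpha})$ if the energy norm were used without an Aubin--Nitsche-type argument).
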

\begin{proof}
	The result is simply an application of the triangle inequality. Dropping the subscript $ \spaceVtheta $:
	\begin{align}
	\begin{split}
	\norm{\eGen} &= \norm{\uNeural - \uNeuralExactOpt + \uNeuralExactOpt - \uDiscrete + \uDiscrete - \uExact} \\
	&\leq \norm{\uNeural - \uNeuralExactOpt} + \norm{\uNeuralExactOpt - \uDiscrete} + \norm{\uDiscrete - \uExact} \\
	&= \norm{\eOpt} + \norm{\eCap} + \norm{\eDis} \\
	&= \text{Err}_\Theta + \text{Err}_\gH + O(h^{\alpha+1})
	\end{split}
	\end{align}	
	Using \lemref{thm:opt-error-low-single-instance}, the optimization error Err$ _{\Theta} $ is nothing but $ \sqrt{2 \norm{\errJ}_{1}} $.
\end{proof}

\subsection{Extending the error analysis for the parametric version}
The above theorem is for a single parameter choice $p\in\Omega$. An identical argument can be extended to the loss constructed by sampling a \emph{finite} number (say $m$) of parameters from a distribution over $\Omega$. We obtain the following corollary from \thmref{thm:network-approximation-capacity}:
\begin{corollary}
	Consider a finite-sample version of the loss $\hat{L}_{\neufenet{}}$ constructed by taking the average over $m$ parameter choices sampled from $\Omega$. Consider the hypothesis class:
	\begin{align}
		\gH := \{ u_\Theta : p \mapsto \sum_{l=1}^k a^{i}_l \text{ReLU}(\langle w^{i}_l, p \rangle + b^{i}_l), i=1,\ldots,n. \}
	\end{align}
	defined as the set of all two-layer neural networks with $k$ hidden neurons equipped with ReLU activation. Then, as long as $k = \Omega(m/h)$, there exists a network in $\gH$ for which $\text{Err}_\gH = 0$.
\end{corollary}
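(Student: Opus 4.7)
The plan is to extend the linear-span argument of \thmref{thm:network-approximation-capacity} from a single fixed parameter to a finite ensemble of $m$ sampled parameters. The empirical loss $\hat{L}_{\neufenet{}}$ is an average of $J$ over a fixed sample set $p_1, \ldots, p_m \subset \Omega$, so it attains the finite-element minimum --- and thereby drives $\text{Err}_\gH$ to zero --- as soon as we can exhibit one network $u_\Theta \in \gH$ whose output at each $p_j$ coincides with the grid-evaluated exact solution $u^*_{p_j}$. The corollary is therefore a simultaneous-interpolation statement rather than a new approximation-theoretic claim.

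The first step is a sample-wise invocation of \thmref{thm:network-approximation-capacity}: for each $j = 1, \ldots, m$ it supplies a bank of $\Omega(1/h)$ ReLU hidden units whose linear combination reproduces $u^*_{p_j}$ exactly at the grid points when evaluated at $p = p_j$. The second step is to assemble these $m$ per-sample constructions into one two-layer network by taking the disjoint union of their hidden units, giving a total width $k = m \cdot \Omega(1/h) = \Omega(m/h)$. Because $\gH$ allows each output coordinate $i = 1, \ldots, n$ to carry its own weight bank $(a^i_l, w^i_l, b^i_l)$, the per-coordinate output layer can be re-calibrated so that at input $p = p_j$ only the $j$-th block contributes, while the others cancel. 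Evaluating the assembled network at each $p_j$ then recovers $u^*_{p_j}$ exactly, so every summand of $\hat{L}_{\neufenet{}}$ is at its finite-element optimum.

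The main obstacle will be the ``gating'' required in the assembly step: the neurons allocated to $p_j$ must contribute their intended values at $p = p_j$ and produce cancellable (ideally zero) output at every other $p_{j'}$. Since the $m$ samples are distinct points in $\Omega$, they are separated by a strictly positive gap, and ReLU units of the memorization type used by \citet{bubeck2020-memorization} can be sharply localized around each $p_j$ within the same $\Omega(1/h)$ budget per block. This preserves the linear-in-$m$ scaling and yields the claimed $k = \Omega(m/h)$ bound.

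A subtlety worth flagging is that if two samples happen to be nearly coincident --- a measure-zero event under continuous sampling from $\Omega$, but still possible --- the sharpness of the indicator-like ReLU bumps required by the gating construction could inflate the constants hidden in $\Omega(\cdot)$. In that case either a small perturbation argument on the samples, or a slightly more refined memorization-width bound, would be needed to retain the $\Omega(m/h)$ scaling; this is expected to be a technical rather than structural difficulty.
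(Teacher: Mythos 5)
The paper itself states this corollary without an explicit proof --- it is presented as following directly from \thmref{thm:network-approximation-capacity} --- so your reconstruction must be judged against the natural extension of that theorem's argument. Your overall plan (invoke the single-instance result once per sample and combine the $m$ resulting neuron banks into one width-$\Omega(m/h)$ network) is indeed the intended skeleton. The problem is the step you yourself flag as the ``main obstacle'': the gating. You need the block of neurons allocated to $p_j$ to contribute $u^*_{p_j}$ at $p=p_j$ and to contribute nothing (or something exactly cancellable) at every other $p_{j'}$, and you propose to achieve this with ``sharply localized'' ReLU bumps around each sample. A two-layer ReLU network cannot exactly localize in a parameter space of dimension $\ge 2$: any finite sum $\sum_l a_l\,\mathrm{ReLU}(\langle w_l, p\rangle + b_l)$ is piecewise affine with unbounded affine regions, so an exact compactly supported bump is unavailable, and an approximate bump would leave residual cross-talk at the other $p_{j'}$, which destroys the exact interpolation needed for $\text{Err}_{\gH}=0$. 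As written, the assembly step does not go through.

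The fix --- and the route consistent with the paper's own proof of \thmref{thm:network-approximation-capacity}, which is a dimension-counting argument rather than a constructive one --- is to drop the gating entirely and argue linearly. For a fixed output coordinate $i$, the map sending the output weights $(a^i_l)_{l=1}^k$ to the vector of network values at the $m$ sample points is linear, with matrix given by the hidden-unit activations $\bigl[\mathrm{ReLU}(\langle w^i_l, p_j\rangle + b^i_l)\bigr]_{j,l}$. Exact simultaneous interpolation of the $m$ target values for that coordinate only requires this matrix to have rank $m$, which holds for generic hidden weights whenever the $p_j$ are distinct; one then solves an $m\times k$ linear system for the $a^i_l$. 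Repeating over the $n\lesssim 1/h$ output coordinates gives a total budget on the order of $mn = O(m/h)$ neurons, i.e.\ the claimed scaling, with no localization needed. This also disposes of your final worry about nearly coincident samples: exact representability depends only on the rank condition (distinctness of the $p_j$), not on their separation, so no perturbation argument is required.
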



This corollary shows that a wide two-layer network exists that can reproduce any (finite) set of field solutions to a PDE system, as long as the width scales linearly in the cardinality of the set. Getting bounds independent of the cardinality is an interesting open question. We have the following result for the optimization error:

\begin{lemma}
	Let $ \uNeural $ be the solution of the optimization problem in \eqnref{eq:minimization-problem-theta-2} when it is optimized by \algoref{alg:pde-parametric}. Then the optimization error $ \norm{\eOpt}_{\spaceVhtheta} = \sqrt{2\norm{\errL}_{1}} $, where $ \eOpt = \uNeural - \uNeuralExactOpt $, $ \errL = \hat{L}_{\neufenet{}}(\netparams) - \hat{L}_{\neufenet{}}(\netparamsExact)$.
\end{lemma}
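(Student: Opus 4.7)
The plan is to mirror the proof of \lemref{thm:opt-error-low-single-instance} one realisation at a time, exploiting the linearity of the empirical expectation in \eqnref{def:fem-loss-stochastic-numerical}. Because $\hat{L}_{\neufenet{}}(\netparams)$ is just the average of $N_s$ instance-level energies $J(\uh(\xvec,\omega_i;\netparams))$, the algebraic identity derived in the non-parametric case can be applied per sample and then averaged.

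Concretely, I would first substitute $\uNeural(\xvec,\omega_i) = \uNeuralExactOpt(\xvec,\omega_i) + \eOpt(\xvec,\omega_i)$ into each summand and expand $|\grad(\uNeuralExactOpt + \eOpt)|^2$. For every fixed $\omega_i$, this yields the three-term split
\begin{equation*}
J(\uh(\xvec,\omega_i;\netparams)) = J(\uh(\xvec,\omega_i;\netparamsExact)) + \half\int_{\spatialD}\diffusivity(\xvec,\omega_i)|\grad \eOpt|^2 d\xvec + \int_{\spatialD}\left[\diffusivity(\xvec,\omega_i)\grad\eOpt\cdot\grad\uNeuralExactOpt - f\eOpt\right]d\xvec.
\end{equation*}
The third term is a Galerkin residual: since $\uNeuralExactOpt$ is the minimiser and $\eOpt$ is an admissible perturbation inside $\spaceVhtheta$ with zero Dirichlet trace (both $\uNeural$ and $\uNeuralExactOpt$ satisfy the same boundary data by construction of the space in \eqnref{eq:def:function-space-V-theta-stochastic}), the same reasoning that justified $\eOpt \in \spaceVh$ in the single-instance proof forces this term to vanish for each $\omega_i$.

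Averaging over $i=1,\dots,N_s$ then collapses the left-hand side to $\hat{L}_{\neufenet{}}(\netparams)$, the first right-hand term to $\hat{L}_{\neufenet{}}(\netparamsExact)$, and the middle term to $\half\normVhtheta{\eOpt}^2$ provided the energy norm in \eqnref{eq:def:stochastic-norm} is interpreted with the same empirical measure used to build $\hat{L}_{\neufenet{}}$. Rearranging gives $\half\normVhtheta{\eOpt}^2 = \errL$, and since $\netparamsExact$ minimises $\hat{L}_{\neufenet{}}$ we have $\errL \geq 0$, so $\normVhtheta{\eOpt} = \sqrt{2\norm{\errL}_1}$ follows at once.

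The main obstacle is the same subtlety that is implicitly present in \lemref{thm:opt-error-low-single-instance}: the Galerkin orthogonality used to kill the mixed term requires $\eOpt$ to be an admissible test direction in a linear test space, yet $\spaceVhtheta$ is the nonlinear image of the parameter set $\Theta$, and first-order optimality in $\netparams$ only supplies orthogonality against the tangent directions $\partial_{\netparams}\uh$. I would resolve this exactly as the authors do tacitly in the single-instance argument, by invoking \thmref{thm:network-approximation-capacity} so that the network minimiser coincides with the unconstrained Galerkin projection onto $\spaceVh$; then $\eOpt \in \spaceVh$ is a legitimate test function for every $\omega_i$ and the cross term genuinely vanishes, completing the parallel with the instance case.
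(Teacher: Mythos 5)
Your proposal is correct and follows essentially the same route as the paper: the paper likewise applies the single-instance identity of \lemref{thm:opt-error-low-single-instance} to each summand $J(\diffusivity_i, u_{\netparams,i})$ and then averages, using $\norm{\eOpt}^2_{\spaceVhtheta} = \oneOver{N_s}\sum_{i=1}^{N_s}\normV{e_{\netparams,i}}^2$ to identify the averaged quadratic term with the empirical energy norm. Your explicit flagging of the Galerkin-orthogonality subtlety (that $\eOpt$ must lie in the linear space $\spaceVh$ rather than merely in the nonlinear image $\spaceVhtheta$) is left tacit in the paper but does not change the argument.
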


\begin{proof}
	Starting with the definition of $ \hat{L} $ from \eqnref{def:fem-loss-stochastic-numerical} and denoting $ \diffusivity_i = \diffusivity(\xvec,\omega_i) $ and $ u_{\netparams, i} = u(\xvec, \omega_i; \netparams) $: 
	\begin{align}
		\hat{L}_{\neufenet{}}(\netparams) = \oneOver{N_s}\sum_{i = 1}^{N_s} J(\diffusivity_i, u_{\netparams, i}) = \oneOver{N_s}\sum_{i = 1}^{N_s} \left( J(\diffusivity_i, u_{\netparamsExact, i}) + \half \norm{e_{\netparams, i}}_{\spaceVh}^2 \right) = \hat{L}_{\neufenet{}}(\netparamsExact) + \half \norm{\eOpt}^2_{\spaceVhtheta},
	\end{align}
	where we have used $ \norm{\eOpt}^2_{\spaceVhtheta} = \mathbb{E}\normV{\eOpt} = \oneOver{N_s}\sum_{i = 1}^{N_s}\normV{e_{\netparams, i}}^2 $. Denoting $ \errL = \hat{L}_{\neufenet{}}(\netparams) - \hat{L}_{\neufenet{}}(\netparamsExact) >0 $, we have the result.
\end{proof}


With this, an analogue of \thmref{thm:diffnet-errors-non-parametric} can be stated for the parametric training with finite data as below.
\begin{theorem} \label{thm:diffnet-errors-parametric}
	Consider any \neufenet{} architecture producing fields with grid spacing $h$. Let $\gH$ denote the hypothesis class of all networks obeying that architecture. Suppose that $\Theta \in \gH$ is a \neufenet{} trained using the loss function $ \hat{L}_{\neufenet{}} $ defined in \eqnref{def:fem-loss-stochastic-numerical}. Then, its generalization error obeys:
	\begin{align} \label{eq:error-bound-parametric}
	\normVhtheta{\eGen} \leq \text{Err}_\Theta + \text{Err}_\gH + O(h^{\alpha+1})
	\end{align}
	where $\text{Err}_\Theta$ is a term that only depends on the \neufenet{} optimization procedure; $\text{Err}_\gH$ only depends on the choice of hypothesis class $\gH$ and $ \alpha $ is the local degree of the basis function as in \lemref{thm:discretization-error}.
\end{theorem}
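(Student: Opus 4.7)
The plan is to mirror the proof of Theorem \ref{thm:diffnet-errors-non-parametric} almost verbatim, with the only adjustment being that every norm is now the expectation-based energy norm $\|\cdot\|_{\spaceVhtheta}$ defined in \eqnref{eq:def:stochastic-norm}. Because this norm is an $L^2$-type norm in $\omega$ over the $V^h$-norm fibers, the standard triangle inequality still holds, and each of the three error components can be bounded by invoking a parametric counterpart of the corresponding lemma used in the instance case.

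First, I would write $\eGen = \uNeural - \uExact$ and insert the two intermediate functions $\uNeuralExactOpt$ and $\uDiscrete$ (with the latter interpreted as the FEM solution parameterized by $\omega$, i.e.\ $\uDiscrete(\xvec,\omega)$). Applying the triangle inequality in $\|\cdot\|_{\spaceVhtheta}$ gives
\begin{align}
\normVhtheta{\eGen} \;\leq\; \normVhtheta{\uNeural - \uNeuralExactOpt} + \normVhtheta{\uNeuralExactOpt - \uDiscrete} + \normVhtheta{\uDiscrete - \uExact}.
\end{align}
The first term is precisely $\normVhtheta{\eOpt}$, which by the parametric optimization lemma just proved above equals $\sqrt{2\|\errL\|_1}$, so it is bundled into $\mathrm{Err}_\Theta$. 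The second term is the capacity error $\normVhtheta{\eCap}$; the corollary to \thmref{thm:network-approximation-capacity} asserts that for a sufficiently wide two-layer network this term can be driven to an arbitrarily small bound, which we absorb into $\mathrm{Err}_\gH$.

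The third and most delicate term is the discretization error. Here I would apply \lemref{thm:discretization-error} pointwise in $\omega$: for each fixed $\omega\in\Omega$ the problem reduces to the deterministic Poisson equation with diffusivity $\diffusivity(\xvec,\omega)$, for which the lemma yields $\|\uDiscrete(\cdot,\omega) - \uExact(\cdot,\omega)\|_{L^2(\spatialD)} \leq C_d(\omega)\, h^{\alpha+1}$, and an analogous bound in the energy norm $\|\cdot\|_{\spaceVh}$ with constant depending on $|u(\cdot,\omega)|_{H^2}$ and on $\|\diffusivity(\cdot,\omega)\|_{L^\infty}$. Squaring, taking expectation over $\omega$, and taking the square root converts this into a bound on $\normVhtheta{\uDiscrete - \uExact}$ of order $h^{\alpha+1}$, provided the $\omega$-dependent constants are uniformly integrable. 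Combining the three bounds yields \eqnref{eq:error-bound-parametric}.

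The main obstacle I anticipate is this last step: the discretization bound needs a constant $C_d(\omega)$ that is square-integrable under the law of $\omega$, which in turn requires uniform (or at least $L^2(\Omega)$) control on $\|\diffusivity(\cdot,\omega)\|_{L^\infty}$, $\|1/\diffusivity(\cdot,\omega)\|_{L^\infty}$, and $|u(\cdot,\omega)|_{H^2}$. For the heterogeneous-diffusivity model of \secref{sec:poisson-kl-problem-intro} this is a mild regularity assumption that should be stated explicitly (essentially the parametric analogue of \cref{Assumption}{assump:discretization}); once it is in place the rest of the argument is a routine Minkowski-in-$\omega$ passage. A secondary subtlety is that the finite-sample loss $\hat{L}_{\neufenet{}}$ only approximates the true expectation, but since the statement is written with respect to the $\|\cdot\|_{\spaceVhtheta}$ norm evaluated at the same sample-based expectation used in the optimization lemma, this Monte Carlo gap is implicitly swept into $\mathrm{Err}_\Theta$ and does not appear as a separate term.
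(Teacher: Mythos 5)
Your decomposition and the treatment of the first two terms are exactly the paper's: the same triangle inequality in $\normVhtheta{\cdot}$, with $\mathrm{Err}_\Theta$ supplied by the parametric optimization lemma and $\mathrm{Err}_\gH$ by the corollary to \thmref{thm:network-approximation-capacity}. Where you diverge is the discretization term. The paper does not take a true expectation over $\Omega$; it writes $\normVhtheta{\eDis}^2$ as the finite average $\frac{1}{N_s}\sum_i \norm{\uDiscrete_i-\uExact_i}_V^2$, bounds the energy norm by $\norm{\diffusivity_i}_{L^2(D)}^2\norm{\grad(\uDiscrete_i-\uExact_i)}_{L^2(D)}^2$, converts the gradient to an $L^2$ quantity element-by-element via a discrete inverse Poincar\'e inequality $\norm{\grad v}_{L^2(K)}\leq C^h_I\norm{v}_{L^2(K)}$, and only then invokes \lemref{thm:discretization-error}; the $\omega$-dependence is controlled simply by taking the maximum of $\norm{\diffusivity_i}_{L^2}$ and $|u_i|_{H^2}$ over the \emph{finite} sample set, so no integrability hypothesis is needed. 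Your route instead asserts ``an analogous bound in the energy norm'' pointwise in $\omega$ and then integrates, which requires the square-integrability assumption you flag. Two remarks on that. First, your integrability concern is real for the population-level norm but is exactly what the paper's finite-sample framing is designed to avoid, and your own closing observation (that the statement is evaluated at the sample-based expectation) already resolves it --- you could have dispensed with the assumption by committing to the finite-sample interpretation throughout. Second, and more substantively, \lemref{thm:discretization-error} is stated only in $L^2(\spatialD)$; the standard energy-norm ($H^1$-seminorm) FEM estimate is one order lower, $O(h^{\alpha})$, so the ``analogous bound in the energy norm'' with rate $h^{\alpha+1}$ that you invoke does not follow from the cited lemma without some additional mechanism --- this is precisely the gap the paper's inverse-inequality step is meant to bridge (by absorbing the norm conversion into the constant $C^h_I$). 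As written, your third-term bound either loses a power of $h$ or silently assumes an unstated superconvergent energy estimate; to match the theorem's rate you should either adopt the paper's inverse-inequality passage or explicitly justify the $h^{\alpha+1}$ energy-norm estimate.
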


\begin{proof}
	The result is simply an application of the triangle inequality. Dropping the subscript $ \spaceVtheta $:
	\begin{align}
	\begin{split}
	\normVhtheta{\eGen} &= \normVhtheta{\uNeural - \uNeuralExactOpt + \uNeuralExactOpt - \uDiscrete + \uDiscrete - \uExact} \\
	&\leq \normVhtheta{\uNeural - \uNeuralExactOpt} + \normVhtheta{\uNeuralExactOpt - \uDiscrete} + \normVhtheta{\uDiscrete - \uExact}\\
	&\leq \normVhtheta{\eOpt} + \normVhtheta{\eCap} + \normVhtheta{\eDis}.
	\end{split}
	\end{align}	
	The third norm can be estimated as
	\begin{align*}
		\normVhtheta{\eDis} = \normVhtheta{\uDiscrete - \uExact} 
		&= \sqrt{\mathbb{E}_{\omega \sim \Omega} \norm{\uDiscrete_i(\omega) - \uExact(\omega)}_{V}^2 } 
		=  \sqrt{\oneOver{N_s}\sum_{i=1}^{N_s}\norm{\uDiscrete_i-\uExact_i}_{V}^2 } 
		\leq \sqrt{\oneOver{N_s}\sum_{i=1}^{N_s}\norm{\diffusivity_i}_{L^2(D)}^2 \norm{\grad(\uDiscrete_i-\uExact_i)}_{L^2(D)}^2 }  \\
		&\leq \sqrt{\oneOver{N_s}\sum_{i=1}^{N_s}\norm{\diffusivity_i}_{L^2(D)}^2 \Bigg(\sum_{K\in\mesh}\norm{\grad(\uDiscrete_i-\uExact_i)}_{L^2(K)}^2\Bigg) } 
		\leq \sqrt{\oneOver{N_s}\sum_{i=1}^{N_s}\norm{\diffusivity_i}_{L^2(D)}^2 C^h_{I}\Bigg(\sum_{K\in\mesh}\norm{(\uDiscrete_i-\uExact_i)}_{L^2(K)}^2\Bigg) }  \\
		&= \sqrt{\oneOver{N_s}\sum_{i=1}^{N_s}C^h_{I}\norm{\diffusivity_i}_{L^2(D)}^2 \norm{(\uDiscrete_i-\uExact_i)}_{L^2(D)}^2 }
		\leq \sqrt{\oneOver{N_s}\sum_{i = 1}^{N_s} C_{di}h^{2(\alpha+1)}}
		\leq C_e h^{\alpha+1},
	\end{align*}	
	where $ C_e = C_e \left(\hat{\Omega}_{N_s}, D, C^h_I, A, B \right) $. Here $ C^h_I $ is the discrete inverse Poincar\'e constant such that $ \norm{\grad u}_{L^2(K)} \leq C^h_I \norm{u}_{L^2(K)}$ for all $ K\in\mesh $; $ A = \max\{\norm{\diffusivity_i}_{L^2(\spatialD)}\}_{i=1}^{N_s} $ and $ B = \max \{|u_i|_{H^2(\spatialD)}\}_{i=1}^{N_s} $.
\end{proof}

\section{Results}\label{sec:results}

\subsection{Error convergence for a Poisson problem} \label{sec:linear-poisson-manufactured}

At the outset, we would like to validate the error bounds stated in \secref{sec:errors}. To this end, we solve the following non-parametric Poisson's equation:
\begin{subequations}\label{eq:poisson-manufactured-pde}
	\begin{align}
	-\laplacian u &= f \ \ \text{in} \ \spatialD \subset \mathbb{R}^2 \\
	u &= 0 \ \ \text{on} \ \partial\spatialD,
	\end{align}
\end{subequations}
where $ D = [0,1]^2 $ is a two-dimensional square domain. The forcing is chosen as
\begin{align} \label{eq:poisson-mms-forcing}
	f = f(\mvec{x}) = f(x,y) = 2\pi^2\sin(\pi x) \sin(\pi y).
\end{align}
The exact solution to \eqnref{eq:poisson-manufactured-pde} with the forcing function shown in \eqnref{eq:poisson-mms-forcing} is given by $ u_{ex}(x,y) = \sin(\pi x) \sin(\pi y) $. For solving this problem, we seek to train a \neufenet{} that can predict the solution $ u $ given the forcing $ f $. At the discrete level, the network takes input $ F^d $ and outputs $ U^d $. The loss function \eqnref{eq:def:loss-functional-inner-product-form} takes the concrete form:
\begin{align}\label{def:loss-fem-poisson-manufactured}
J(u) = \frac{1}{2} \int |\grad u|^2 d\mvec{x} - \int u f d\mvec{x}.
\end{align}
\figref{fig:poisson-manufactured-contours} shows the contours of the network input $ F^d $, output $ U^d $, the exact solution evaluated on the mesh $ \mesh $ (a $ 256\times 256 $ grid) and the error ($ U^d - U_{ex} $).

\begin{figure}[!htb]
	\centering
    \includegraphics[trim=120 0 50 0,clip,width=0.99\linewidth]{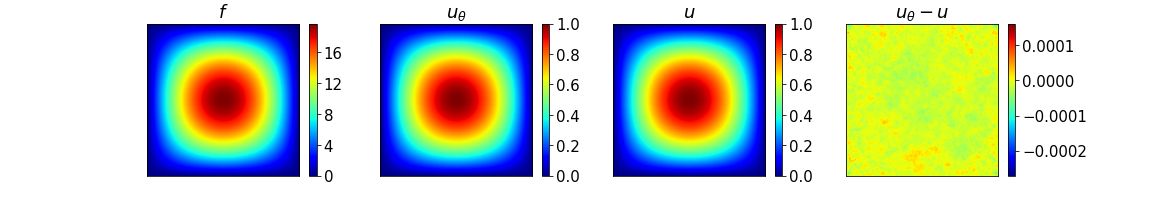}
	\caption{Solution to \eqnref{eq:poisson-manufactured-pde} with the forcing term as shown in  \eqnref{eq:poisson-mms-forcing} on a $ 256\times 256  $ grid. (Left) the discrete forcing $ F^d $, (middle-left) the solution $ U^d $ obtained from a \neufenet{} with a U-net architecture, (middle-right) the exact solution $ u_{ex} = \sin(\pi x)\sin(\pi y) $ evaluated on the mesh, denoted $ U_{ex}^d $, (right) the error $ (U^d - U^d_{ex}) $.  From left to right: $f$, $u^h_\theta$, $u_{num}$ and ($u^h_\theta-u_{num}$). Note: $ \norm{\uNeural} = 0.499979 $, $ \norm{u} = 0.5 $, $ \norm{\uNeural-u} = 2.4\times 10^{-5} $.}
	\label{fig:poisson-manufactured-contours}
\end{figure}

To study convergence behavior, we repeat this procedure with varying mesh length $ h $, starting from $ h = \oneOver{8} $ and gradually decreasing $ h $ till $ h = \oneOver{256} $. The calculated errors for each resolution are reported in \figref{fig:convergence-plot-non-parametric}. 

\thmref{thm:diffnet-errors-non-parametric} estimates the total error by the contributions from three individual sources of errors, namely, the error due to finite element discretization ($ \eDis $), the error due to network approximation capacity ($ \eCap $) and the error due to the optimization process ($ \eOpt $). Of these three errors, $ \eDis $ maintains a particular relation to the discretization parameter $ h $ (i.e., the mesh size). The other two errors do not hold a straight-forward relation with the mesh size or the network parameters. Thus, we put the estimate in \eqnref{eq:error-bound-non-parametric} to test by keeping both $ \eCap $ and $ \eOpt $ sufficiently low, so that the dominating error is the discretization error $ \eDis $.

\begin{figure}[!t]
\centering
  \begin{tikzpicture}
     \begin{loglogaxis}[
         width=0.6\linewidth, 
         height=0.38\linewidth,
         xlabel=$h$, 
         ylabel=$\norm{\uNeural - u}_{L^2(\spatialD)}$,
         legend style={at={(0.02,0.98)},anchor=north west,legend columns=1}, 
         x tick label style={rotate=0,anchor=north}, 
         xtick={0.00390625, 0.0078125, 0.015625 , 0.03125  , 0.0625, 0.125},
         xticklabels={$2^{-8}$,$2^{-7}$,$2^{-6}$,$2^{-5}$,$2^{-4}$,$2^{-3}$},
         legend cell align={left},
      ]
      \addplot 
      table[x expr={1.0/((\thisrow{elm}))},y expr={\thisrow{diffnet_no_network}},col sep=comma]{results_data/poisson-convergence/errors_linear.txt};
      \addplot 
      table[x expr={1.0/((\thisrow{elm}))},y expr={\thisrow{diffnet_with_network_capacity_enhanced}},col sep=comma]{results_data/poisson-convergence/errors_linear.txt};
      \addplot 
      table[x expr={1.0/((\thisrow{elm}))},y expr={\thisrow{conventional_fem_code}},col sep=comma]{results_data/poisson-convergence/errors_linear.txt};
      \logLogSlopeTriangle{0.8}{0.2}{0.2}{2}{blue};
      \legend{\footnotesize \neufenet{} (No network), \footnotesize \neufenet{} (With network), \footnotesize Conventional FEM}
     \end{loglogaxis}
  \end{tikzpicture}
\caption{Convergence of the error in $L^2$ norm for the Poisson's equation with analytical solution $u =\sin{(\pi x)}\sin{(\pi y}$).}
\label{fig:convergence-plot-non-parametric}
\end{figure}

We illustrate this by running two sets of \neufenet{} simulations.
\begin{itemize}
	\item The first set is where we completely remove the neural network and optimize the loss directly with respect to the function $ \uh $, i.e.,
	\begin{align}
		\uh = \argmin_{\uh \in \spaceVh} J(\uh),
	\end{align}
	which is nothing but a classic Rayleigh-Ritz optimization. The absence of a neural network eliminates the error $ \eCap $. So the total error is a combination of only $ \eDis $ and $ \eOpt $. The total errors from this algorithm are plotted against the mesh size  $ h $ in \figref{fig:convergence-plot-non-parametric} under the legend ``\neufenet{} (No network)''. We see that the error plot has a slope of 2.
	
	\item The second set uses a neural network. Thus the optimization statement is the same as presented in \eqnref{eq:minimization-problem-theta-1}. Therefore, as discussed in \thmref{thm:diffnet-errors-non-parametric}, the total error is a combination of all three errors indicated in \eqnref{eq:error-bound-non-parametric}. To keep $ \eCap $ negligible, we need to be cognizant of the fact that the spatial degrees of freedom (i.e., the number of  bases in $ \spaceVh $) is inversely proportional to $ h^2 $ (for a 2D domain). Thus, as we decrease $ h $, the ``size" of $ \spaceVh $ increases. Therefore the function space $ \spaceVhtheta $ must also get bigger, in particular, it should be big enough to satisfy $ \spaceVhtheta\supset\spaceVh $. One way to accommodate this fact is to use a high-capacity network to solve the equation at all $ h $-levels. But the UNet architecture described in \secref{sec:model-arch} does not allow very high depth when the input/output size is low. Thus, we must enhance the network capacity at different $ h $ levels gradually by increasing the network depth. Using this strategy, we solve \eqnref{eq:poisson-manufactured-pde} at various $ h $-levels and plot the errors in \figref{fig:convergence-plot-non-parametric}. The slope of 2 of the error curve once again confirms that both $ \eCap\approx0 $ and $ \eOpt\approx0 $. Interestingly, if we do not keep $ \eCap $ negligible by increasing the depth of the network for smaller $h$, both $ \eCap $ and $ \eOpt $ can dominate (see \ref{app:error-convergence}). This suggests that gradual increase in network complexity is warranted as the discretization becomes finer. This is made computationally efficient by using multi-grid like approaches~\cite{balu2021distributed}.
\end{itemize}
\figref{fig:convergence-plot-non-parametric} also shows a plot of errors obtained from solving the same problem using a conventional FEM code (using numerical linear algebra) for reference. We use a GMRES solver with a tolerance of $10^{-8}$.

\subsection{Poisson's equation with parametric log permeability}\label{sec:results-poisson-kl-problem}
Our second illustration is the solution of the PDE defined in \secref{sec:poisson-kl-problem-intro} (\eqnref{eq:poisson-kl-intro}), which is frequently used for simulating practical problems such as heat or mass transfer through an inhomogeneous media. We solve this problem in both 2D and 3D domains, i.e., $ D=[0,1]^2 $ as well as $ D=[0,1]^3 $; $ \tilde{\nu} $ is now a function of $ \mvec{x} = (x,y) $ and is also parametric, as mentioned in \eqnref{def:poisson-nu-harmonic-expansion}.

We seek a mapping of the form $ u = \neuralMap(\diffusivity) $, where $ \neuralMap $ denotes the neural network. So $ S^d $ refers to the discrete version of $ \diffusivity $. Following the principle of Karhunen-Loeve expansion as described in \ref{app:stochastic-representation-description}, the infinite dimensional random space is truncated into a finite dimensional space. Thereafter, a finite set of samples are prepared from this space for the training process. Suppose the number of samples is $ N_s $. Then the loss function takes the concrete form:
\begin{align}\label{def:loss-fem-poisson-kl-concrete}
J = \oneOver{N_s} \sum_{i = 1}^{N_s} \int_{\spatialD} \tilde{\nu}_i(\xvec)|\grad u_i(\xvec)|^2 d\mvec{x},
\end{align}
where, $ u_i = \neuralMap(\diffusivity_i, \netparams) $.
Both Dirichlet and Neumann conditions are present in this equation. In \neufenet{}, the Dirichlet conditions are applied exactly. The zero-Neumann condition is also applied exactly at the continuous level, since the boundary integrals vanish at the continuous level (see \eqnref{eq:def:bilnear-linear-form}).

\begin{figure}[t!]
	\centering
	\includegraphics[trim=120 0 50 0,clip,width=0.85\linewidth]{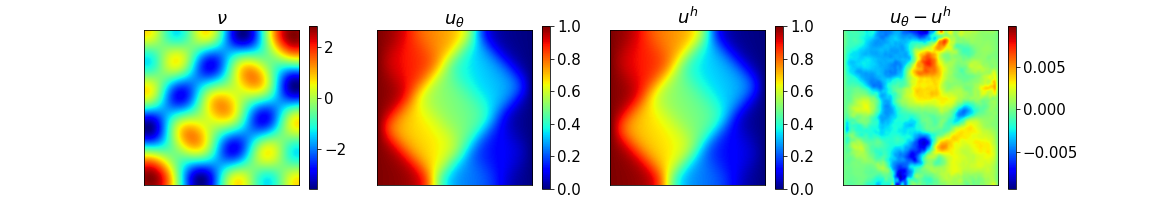}
	\caption{(Left) contours of $ \ln(\nu) $ corresponding to $ \underline{a} = (-0.24, -0.17, 0.13, 0.07, -0.07, 0.14) $, (middle-left) \neufenet{} prediction ($ \uNeural $), (midddle-right) reference numerical solution using FEM ($ \uDiscrete $), (right) contours of $ (\uNeural - \uDiscrete) $}
	\label{fig:poisson-kl-contours-2}
\end{figure}
\begin{figure}[t]
	\centering
	\begin{minipage}{.31\textwidth}
		\centering
		\begin{tikzpicture}
		\begin{axis}[
		width=0.99\linewidth, 
		xlabel={$ x $}, 
		ylabel={$ u $},
		legend style={at={(0.95,0.95)},anchor=north east,legend columns=1}, 
		x tick label style={rotate=0,anchor=north} 
		]
		\addplot+ [thick,mark=none] table[x expr={\thisrow{s}},y expr={\thisrow{loc1}},col sep=comma]{results_data/kl-stochastic/anecdotal-query/ycuts_dfn.txt};
		\addplot+ [thick,mark=none, dashed] table[x expr={\thisrow{s}},y expr={\thisrow{loc1}},col sep=comma]{results_data/kl-stochastic/anecdotal-query/ycuts_fem.txt};
		\legend{\tiny{\neufenet{}}, \tiny{Numerical}}
		\end{axis}
		\end{tikzpicture}
	\end{minipage}
	\begin{minipage}{.31\textwidth}
		\centering
		\begin{tikzpicture}
		\begin{axis}[
		width=0.99\linewidth, 
		xlabel={$ x $}, 
		ylabel={$ u $},
		legend style={at={(0.95,0.95)},anchor=north east,legend columns=1}, 
		x tick label style={rotate=0,anchor=north} 
		]
		\addplot+ [thick,mark=none] table[x expr={\thisrow{s}},y expr={\thisrow{loc2}},col sep=comma]{results_data/kl-stochastic/anecdotal-query/ycuts_dfn.txt};
		\addplot+ [thick,mark=none, dashed] table[x expr={\thisrow{s}},y expr={\thisrow{loc2}},col sep=comma]{results_data/kl-stochastic/anecdotal-query/ycuts_fem.txt};
		\legend{\tiny{\neufenet{}}, \tiny{Numerical}}
		\end{axis}
		\end{tikzpicture}
	\end{minipage}
	\begin{minipage}{.31\textwidth}
		\centering
		\begin{tikzpicture}
		\begin{axis}[
		width=0.99\linewidth, 
		xlabel={$ x $}, 
		ylabel={$ u $},
		legend style={at={(0.95,0.95)},anchor=north east,legend columns=1}, 
		x tick label style={rotate=0,anchor=north} 
		]
		\addplot+ [thick,mark=none] table[x expr={\thisrow{s}},y expr={\thisrow{loc3}},col sep=comma]{results_data/kl-stochastic/anecdotal-query/ycuts_dfn.txt};
		\addplot+ [thick,mark=none, dashed] table[x expr={\thisrow{s}},y expr={\thisrow{loc3}},col sep=comma]{results_data/kl-stochastic/anecdotal-query/ycuts_fem.txt};
		\legend{\tiny{\neufenet{}}, \tiny{Numerical}}
		\end{axis}
	\end{tikzpicture}
	\end{minipage}
	\\
	\begin{minipage}{.31\textwidth}
		\centering
		\begin{tikzpicture}
		\begin{axis}[
		width=0.99\linewidth, 
		xlabel={$ y $}, 
		ylabel={$ u $},
		legend style={at={(0.95,0.95)},anchor=north east,legend columns=1}, 
		x tick label style={rotate=0,anchor=north} 
		]
		\addplot+ [thick,mark=none] table[x expr={\thisrow{s}},y expr={\thisrow{loc1}},col sep=comma]{results_data/kl-stochastic/anecdotal-query/xcuts_dfn.txt};
		\addplot+ [thick,mark=none, dashed] table[x expr={\thisrow{s}},y expr={\thisrow{loc1}},col sep=comma]{results_data/kl-stochastic/anecdotal-query/xcuts_fem.txt};
		\legend{\tiny{\neufenet{}}, \tiny{Numerical}}
		\end{axis}
		\end{tikzpicture}
	\end{minipage}
	\begin{minipage}{.31\textwidth}
		\centering
		\begin{tikzpicture}
		\begin{axis}[
		width=0.99\linewidth, 
		xlabel={$ y $}, 
		ylabel={$ u $},
		legend style={at={(0.95,0.95)},anchor=north east,legend columns=1}, 
		x tick label style={rotate=0,anchor=north} 
		]
		\addplot+ [thick,mark=none] table[x expr={\thisrow{s}},y expr={\thisrow{loc2}},col sep=comma]{results_data/kl-stochastic/anecdotal-query/xcuts_dfn.txt};
		\addplot+ [thick,mark=none, dashed] table[x expr={\thisrow{s}},y expr={\thisrow{loc2}},col sep=comma]{results_data/kl-stochastic/anecdotal-query/xcuts_fem.txt};
		\legend{\tiny{\neufenet{}}, \tiny{Numerical}}
		\end{axis}
		\end{tikzpicture}
	\end{minipage}
	\begin{minipage}{.31\textwidth}
		\centering
		\begin{tikzpicture}
		\begin{axis}[
		width=0.99\linewidth, 
		xlabel={$ y $}, 
		ylabel={$ u $},
		legend style={at={(0.95,0.95)},anchor=north east,legend columns=1}, 
		x tick label style={rotate=0,anchor=north} 
		]
		\addplot+ [thick,mark=none] table[x expr={\thisrow{s}},y expr={\thisrow{loc3}},col sep=comma]{results_data/kl-stochastic/anecdotal-query/xcuts_dfn.txt};
		\addplot+ [thick,mark=none, dashed] table[x expr={\thisrow{s}},y expr={\thisrow{loc3}},col sep=comma]{results_data/kl-stochastic/anecdotal-query/xcuts_fem.txt};
		\legend{\tiny{\neufenet{}}, \tiny{Numerical}}
		\end{axis}
		\end{tikzpicture}
	\end{minipage}
	\caption{Line cuts for the contours presented in Fig \ref{fig:poisson-kl-contours-2}. (Top row) $ x $-parallel line cuts at $ y = $ 0.2, 0.5, 0.8. (Bottom row) $ y $-parallel line cuts at $ x = $ 0.2, 0.5, 0.8.}
	\label{fig:line-cuts-kl-2d-single}
\end{figure}

By optimizing the loss (\eqnref{def:loss-fem-poisson-kl-concrete}), we attempt to  learn the distribution of the stochastic solution, given that the coefficients in the log permeability K-L sum come from a known range of values that depends on the parameter space $ \omega $. We truncate the K-L sum after 6 terms. These six coefficients form a six-dimensional space from which the coefficient tuples $ \{a_i\}_{i=1}^{6} $ can be drawn. The \neufenet{} is trained by selecting a finite number ($ N_s $) of pseudo-random samples from this 6-dimensional space, specifically $ \mvec{a}\in [-\sqrt{3}, \sqrt{3}]^6 $ (see also \ref{app:stochastic-representation-description}). For the results shown below, we have taken $ N_s=65536 $. We used the Adam optimization algorithm, with a learning rate of $ 10^{-4} $. Once the network is trained, \textit{\textbf{we can perform inference by evaluating the solution for any diffusivity $ \diffusivity $ taken from the sample space}}. To illustrate the nature of the input ($ \diffusivity $ or $ S^d $) and the solution ($ u $ or $ U^d $), we present an anecdotal (i.e., a non-special and random) set of $ S^d $ and $ U^d $ in \figref{fig:poisson-kl-contours-2}. A reference solution using a conventional FEM program is also presented therein. Furthermore, sectional line cuts for these contours are shown in \figref{fig:line-cuts-kl-2d-single}. The line cuts display a close match between \neufenet{} and the numerical solution.

\input{figure_panels/stochastic_histogram_panel-unet.tex}
\subsubsection{Statistical distribution of solution}\label{sec:statistical-dist}

Since \eqnref{eq:poisson-kl-intro} is parametric, we can compare the quality of solution from \neufenet{} with a reference numerical solution in a statistical manner. We choose some points on the domain $ \spatialD $ (shown by black dots in \figref{fig:query-points-for-statistics}) and evaluate the solution values at those query points for $ 16,384 $ samples of $ \diffusivity $, where the $ \{a_i\}_{i=1}^{6} $ tuples are taken from a smaller subset of the full sample space, namely, $ \mvec{a}\in [-\oneOver{4},\oneOver{4}]^6 $. Therefore , at a particular query point, those $ 16,384 $ solution values approximately represent a distribution of the solution values at that point. This sample of solution values at each of those query points then allow us to create histograms of the solution values at each query point.

The histograms are shown in \figref{fig:stochastic-histograms}. We notice a very close match between the histograms obtained from \neufenet{} and a conventional FEM solver. This once again confirms that \neufenet{} is effective in providing the correct statistics of the parametric Poisson equation.

\subsection{3-D Poisson's equation}\label{sec:results-poisson-kl-problem-3d}
So far, we have focused on two-dimensional problems. Although 2D problems are useful in demonstrating the key features and properties of \neufenet{} method, a real test of neural PDE solvers lies in their ability to solve three-dimensional problems. \neufenet{} can solve both 2D and 3D problems without much changes to the architecture. To provide an example of this, we solve the 3D counterpart of the PDE defined in \eqnref{eq:poisson-kl-intro}, which is:
\begin{subequations}\label{eq:poisson-kl-3d}
	\begin{align}
	-\grad\cdot(\tilde{\nu}(\mvec{x})\grad u) &= 0 \text{  in  } D=[0,1]^3, \\
	u(x=0, y,z) &= 1, \\
	u(x=1, y,z) &= 0, \\
	\hat{\mvec{n}}\cdot\grad u &= 0 \quad \text{on all other boundaries},
	\end{align}
\end{subequations}
where $ \hat{\mvec{n}} $ denotes the outward normal to the boundary. Once again, the functional form of $ \tilde{\diffusivity(\mvec{x})} $ is described in \ref{app:stochastic-representation-description}. The loss function is a direct analogue of Equation \ref{def:loss-fem-poisson-kl-concrete}:
\begin{align}\label{def:loss-fem-poisson-kl-concrete-3d}
J = \int_{\spatialD} \tilde{\nu}(\xvec)|\grad u(\xvec)|^2 d\mvec{x},
\end{align}
In \figref{fig:poisson-kl-3d-contours}, we show one randomly selected pair of $ \diffusivity $ and $ u $ for a 3D problem obtained using \neufenet{}. We optimize \eqnref{def:loss-fem-poisson-kl-concrete-3d} for a randomly selected set of coefficients $ \mvec{a} = (-1,1.4,1.5,-1.3,-1.6,0.3) $. The plots of $ \diffusivity(x,y,z) $ and $ \uNeural(x,y,z) $ are shown in \figref{fig:poisson-kl-3d-contours}.

\begin{figure*}[!h]
	\centering
	\includegraphics[trim=300 105 40 170,clip,width=0.45\linewidth]{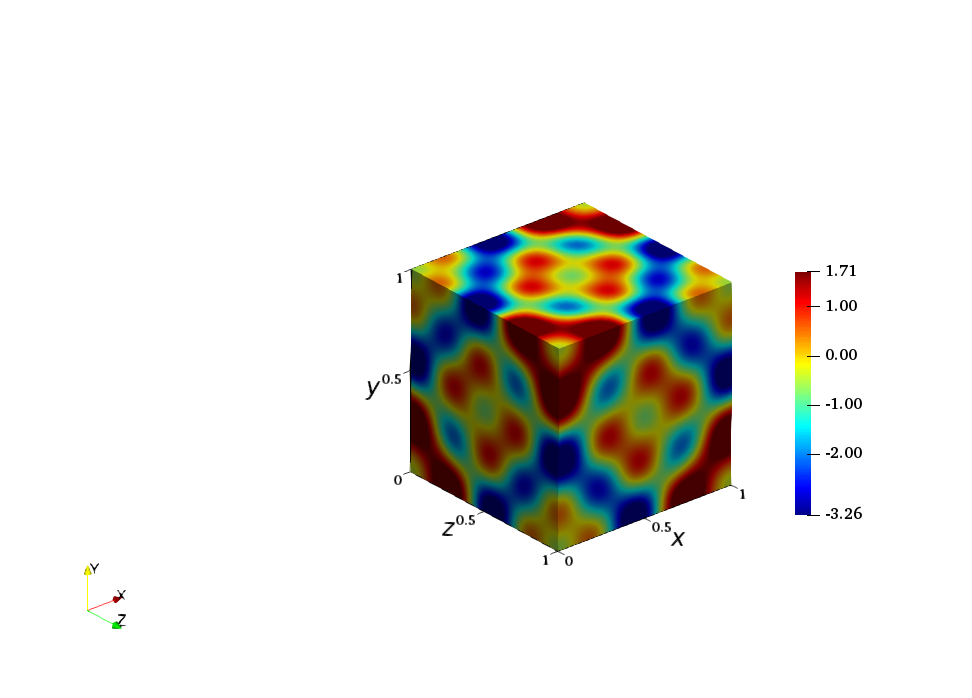}
	\includegraphics[trim=300 105 40 170,clip,width=0.45\linewidth]{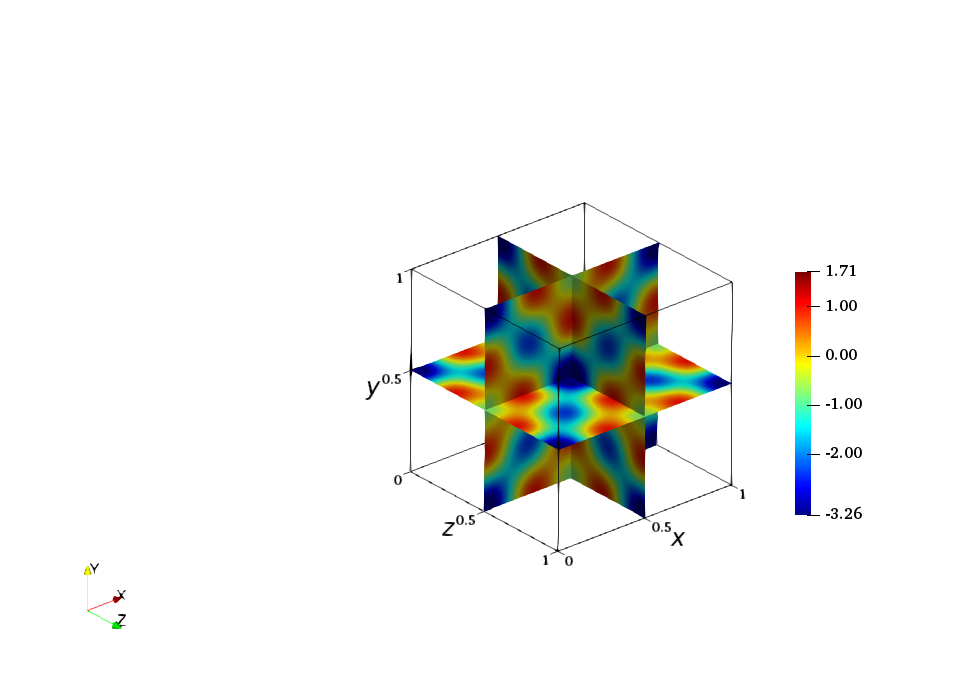}
	\includegraphics[trim=300 105 40 170,clip,width=0.45\linewidth]{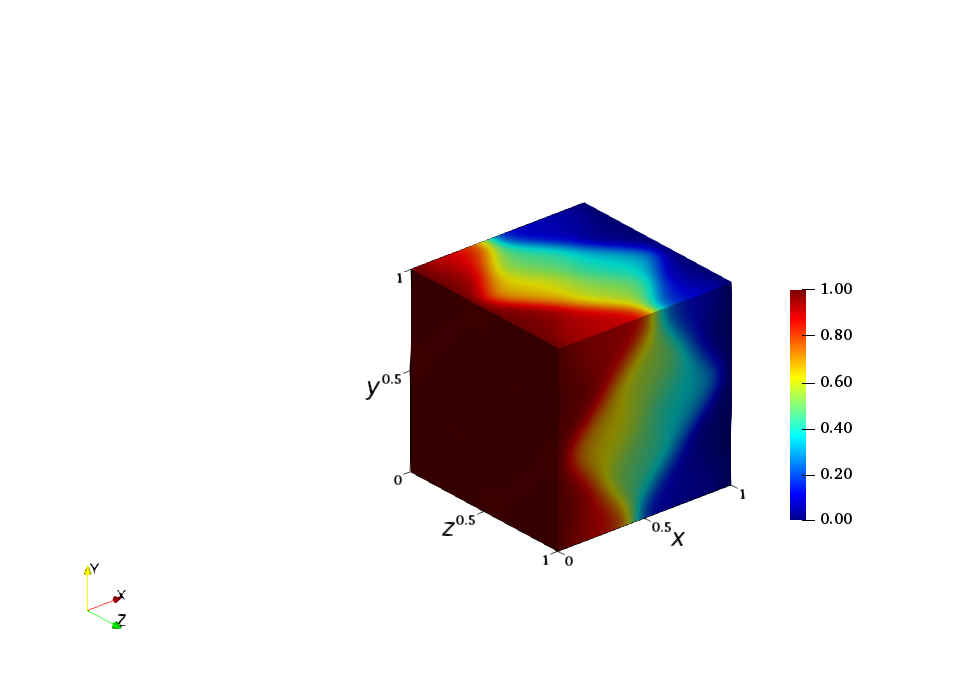}
	\includegraphics[trim=300 105 40 170,clip,width=0.45\linewidth]{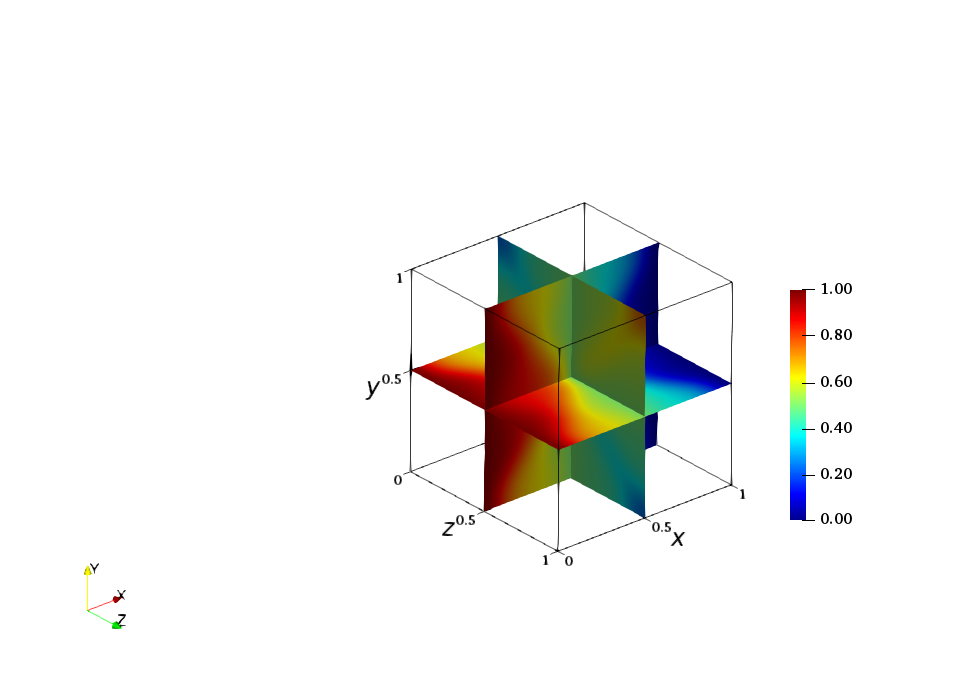}
	\caption{Contours of $\ln(\nu(x,y,z))$ and the solution $\uNeural(x,y,z)$ to the 3D Poisson's problem (\eqnref{eq:poisson-kl-3d}) on a $ 64\times 64\times 64 $ mesh (for $ \mvec{a} = (-1,1.4,1.5,-1.3,-1.6,0.3) $).}
	\label{fig:poisson-kl-3d-contours}
\end{figure*}

\section{Conclusions and future directions}

In this paper we develop a neural method \neufenet{} for solving parametric PDEs where the discretization and the loss functions are inspired by the continuous Galerkin (cG) method and the Rayleigh-Ritz method respectively. Due to the choice of discretization scheme, \neufenet{} inherits the approximation properties of the cG method. This allows us to: (i) calculate spatial derivatives in the same way as in finite element methods, (ii) perform spatial integration using simple Gaussian quadrature schemes, (iii) apply Dirichlet and (zero) Neumann boundary conditions exactly and (iv) derive \textit{a priori} error estimates.

The optimization problem is defined in terms of an energy functional derived from variational principles. This is in contrast to residual based minimization which is the more widely followed process across the current neural methods, albeit with exceptions (such as \citet{yu2017deep}). We showed examples of Poisson's equation solved using \neufenet{}. Since Poisson's equation is a self-adjoint equation, its energy functional is convex and thus possesses a unique minima, which can be easily found by a gradient based optimization method. We further illustrate that such a method can successfully be used to solve stochastic PDEs and determine its statistical properties.

We identify some disadvantages and opportunities for future work. \neufenet{} as described in this paper suits ``steady-state'' equations such as the Poisson's equation or other similar equations. We hypothesize that \neufenet{} may require additional features for solving parabolic equations because the full space-time functional is not convex. Since \neufenet{} uses discretization provided by a finite element method, therefore \neufenet{} is mesh based. It must be noted that for the examples shown in this paper, we did not need to store a mesh explicitly because the meshes were structured meshes and fully regular. But for an arbitrary geometry, such a mesh will need to be saved to memory. Because of the requirement of mesh in \neufenet{}, it can be memory-intensive, especially for 3D or higher dimensional problems. But this issue can be alleviated by considering distributed frameworks (such as \citet{balu2021distributed}). We anticipate that such approaches that tightly integrate neural architectures with well developed scientific computing approaches will prove successful towards our goal of a (near) real time neural PDE inference.

\section*{Acknowledgements}
This work was supported in part by the National Science Foundation under grants CCF-2005804, LEAP-HI-2053760, CMMI-1644441, CPS-FRONTIER-1954556, USDA-NIFA-2021-67021-35329 and ARPA-E DIFFERENTIATE-DE-AR0001215. Any information provided and opinions expressed in this material are those of the author(s) and do not necessarily reflect the views of, nor any endorsements by, the funding agencies.



\bibliographystyle{elsart-num-names}
\bibliography{reflist}

\begin{thebibliography}{50}
\providecommand{\natexlab}[1]{#1}
\providecommand{\url}[1]{\texttt{#1}}
\providecommand{\href}[2]{#2}
\providecommand{\path}[1]{#1}
\providecommand{\eprint}[1]{\href{http://arxiv.org/abs/#1}{\path{#1}}}
\providecommand{\DOIprefix}{doi:}
\providecommand{\ArXivprefix}{arXiv:}
\providecommand{\URLprefix}{URL: }
\providecommand{\Pubmedprefix}{pmid:}
\providecommand{\doi}[1]{\href{http://dx.doi.org/#1}{\path{#1}}}
\providecommand{\Pubmed}[1]{\href{pmid:#1}{\path{#1}}}
\providecommand{\BIBand}{and}
\providecommand{\bibinfo}[2]{#2}
\ifx\xfnm\undefined \def\xfnm[#1]{\unskip,\space#1}\fi
\bibitem[{Raissi et~al.(2019)Raissi, Perdikaris and
  Karniadakis}]{raissi2019physics}
\bibinfo{author}{Raissi\xfnm[ M]}, \bibinfo{author}{Perdikaris\xfnm[ P]},
  \bibinfo{author}{Karniadakis\xfnm[ GE]}.
\newblock \bibinfo{title}{Physics-informed neural networks: A deep learning
  framework for solving forward and inverse problems involving nonlinear
  partial differential equations}.
\newblock \bibinfo{journal}{Journal of Computational Physics}
  \bibinfo{year}{2019};\bibinfo{volume}{378}:\bibinfo{pages}{686--707}.
\bibitem[{Rudy et~al.(2019)Rudy, Alla, Brunton and Kutz}]{rudy2019data}
\bibinfo{author}{Rudy\xfnm[ S]}, \bibinfo{author}{Alla\xfnm[ A]},
  \bibinfo{author}{Brunton\xfnm[ SL]}, \bibinfo{author}{Kutz\xfnm[ JN]}.
\newblock \bibinfo{title}{Data-driven identification of parametric partial
  differential equations}.
\newblock \bibinfo{journal}{SIAM Journal on Applied Dynamical Systems}
  \bibinfo{year}{2019};\bibinfo{volume}{18}(\bibinfo{number}{2}):\bibinfo{pages}{643--660}.
\bibitem[{Tompson et~al.(2017)Tompson, Schlachter, Sprechmann and
  Perlin}]{tompson2017accelerating}
\bibinfo{author}{Tompson\xfnm[ J]}, \bibinfo{author}{Schlachter\xfnm[ K]},
  \bibinfo{author}{Sprechmann\xfnm[ P]}, \bibinfo{author}{Perlin\xfnm[ K]}.
\newblock \bibinfo{title}{Accelerating eulerian fluid simulation with
  convolutional networks}.
\newblock In: \bibinfo{booktitle}{International Conference on Machine
  Learning}. \bibinfo{organization}{PMLR}; \bibinfo{year}{2017}, p.
  \bibinfo{pages}{3424--3433}.
\bibitem[{Raissi and Karniadakis(2018)}]{raissi2018hidden}
\bibinfo{author}{Raissi\xfnm[ M]}, \bibinfo{author}{Karniadakis\xfnm[ GE]}.
\newblock \bibinfo{title}{Hidden physics models: Machine learning of nonlinear
  partial differential equations}.
\newblock \bibinfo{journal}{Journal of Computational Physics}
  \bibinfo{year}{2018};\bibinfo{volume}{357}:\bibinfo{pages}{125--141}.
\bibitem[{Lu et~al.(2019)Lu, Jin and Karniadakis}]{lu2019deeponet}
\bibinfo{author}{Lu\xfnm[ L]}, \bibinfo{author}{Jin\xfnm[ P]},
  \bibinfo{author}{Karniadakis\xfnm[ GE]}.
\newblock \bibinfo{title}{Deeponet: Learning nonlinear operators for
  identifying differential equations based on the universal approximation
  theorem of operators}.
\newblock \bibinfo{journal}{arXiv preprint arXiv:191003193}
  \bibinfo{year}{2019};.
\bibitem[{Kharazmi et~al.(2019)Kharazmi, Zhang and
  Karniadakis}]{kharazmi2019variational}
\bibinfo{author}{Kharazmi\xfnm[ E]}, \bibinfo{author}{Zhang\xfnm[ Z]},
  \bibinfo{author}{Karniadakis\xfnm[ GE]}.
\newblock \bibinfo{title}{Variational physics-informed neural networks for
  solving partial differential equations}.
\newblock \bibinfo{journal}{arXiv preprint arXiv:191200873}
  \bibinfo{year}{2019};.
\bibitem[{Sirignano and Spiliopoulos(2018)}]{sirignano2018dgm}
\bibinfo{author}{Sirignano\xfnm[ J]}, \bibinfo{author}{Spiliopoulos\xfnm[ K]}.
\newblock \bibinfo{title}{Dgm: A deep learning algorithm for solving partial
  differential equations}.
\newblock \bibinfo{journal}{Journal of computational physics}
  \bibinfo{year}{2018};\bibinfo{volume}{375}:\bibinfo{pages}{1339--1364}.
\bibitem[{Yang et~al.(2018)Yang, Zhang and Karniadakis}]{yang2018physics}
\bibinfo{author}{Yang\xfnm[ L]}, \bibinfo{author}{Zhang\xfnm[ D]},
  \bibinfo{author}{Karniadakis\xfnm[ GE]}.
\newblock \bibinfo{title}{Physics-informed generative adversarial networks for
  stochastic differential equations}.
\newblock \bibinfo{journal}{arXiv preprint arXiv:181102033}
  \bibinfo{year}{2018};.
\bibitem[{Pang et~al.(2019)Pang, Lu and Karniadakis}]{pang2019fpinns}
\bibinfo{author}{Pang\xfnm[ G]}, \bibinfo{author}{Lu\xfnm[ L]},
  \bibinfo{author}{Karniadakis\xfnm[ GE]}.
\newblock \bibinfo{title}{fpinns: Fractional physics-informed neural networks}.
\newblock \bibinfo{journal}{SIAM Journal on Scientific Computing}
  \bibinfo{year}{2019};\bibinfo{volume}{41}(\bibinfo{number}{4}):\bibinfo{pages}{A2603--A2626}.
\bibitem[{Karumuri et~al.(2020)Karumuri, Tripathy, Bilionis and
  Panchal}]{karumuri2020simulator}
\bibinfo{author}{Karumuri\xfnm[ S]}, \bibinfo{author}{Tripathy\xfnm[ R]},
  \bibinfo{author}{Bilionis\xfnm[ I]}, \bibinfo{author}{Panchal\xfnm[ J]}.
\newblock \bibinfo{title}{Simulator-free solution of high-dimensional
  stochastic elliptic partial differential equations using deep neural
  networks}.
\newblock \bibinfo{journal}{Journal of Computational Physics}
  \bibinfo{year}{2020};\bibinfo{volume}{404}:\bibinfo{pages}{109120}.
\bibitem[{Han et~al.(2018)Han, Jentzen and Weinan}]{han2018solving}
\bibinfo{author}{Han\xfnm[ J]}, \bibinfo{author}{Jentzen\xfnm[ A]},
  \bibinfo{author}{Weinan\xfnm[ E]}.
\newblock \bibinfo{title}{Solving high-dimensional partial differential
  equations using deep learning}.
\newblock \bibinfo{journal}{Proceedings of the National Academy of Sciences}
  \bibinfo{year}{2018};\bibinfo{volume}{115}(\bibinfo{number}{34}):\bibinfo{pages}{8505--8510}.
\bibitem[{Michoski et~al.(2019)Michoski, Milosavljevic, Oliver and
  Hatch}]{michoski2019solving}
\bibinfo{author}{Michoski\xfnm[ C]}, \bibinfo{author}{Milosavljevic\xfnm[ M]},
  \bibinfo{author}{Oliver\xfnm[ T]}, \bibinfo{author}{Hatch\xfnm[ D]}.
\newblock \bibinfo{title}{Solving irregular and data-enriched differential
  equations using deep neural networks}.
\newblock \bibinfo{journal}{arXiv preprint arXiv:190504351}
  \bibinfo{year}{2019};.
\bibitem[{Samaniego et~al.(2020)Samaniego, Anitescu, Goswami, Nguyen-Thanh,
  Guo, Hamdia et~al.}]{samaniego2020energy}
\bibinfo{author}{Samaniego\xfnm[ E]}, \bibinfo{author}{Anitescu\xfnm[ C]},
  \bibinfo{author}{Goswami\xfnm[ S]}, \bibinfo{author}{Nguyen-Thanh\xfnm[ VM]},
  \bibinfo{author}{Guo\xfnm[ H]}, \bibinfo{author}{Hamdia\xfnm[ K]}, et~al.
\newblock \bibinfo{title}{An energy approach to the solution of partial
  differential equations in computational mechanics via machine learning:
  Concepts, implementation and applications}.
\newblock \bibinfo{journal}{Computer Methods in Applied Mechanics and
  Engineering}
  \bibinfo{year}{2020};\bibinfo{volume}{362}:\bibinfo{pages}{112790}.
\bibitem[{Ramabathiran and Ramachandran(2021)}]{ramabathiran2021spinn}
\bibinfo{author}{Ramabathiran\xfnm[ AA]}, \bibinfo{author}{Ramachandran\xfnm[
  P]}.
\newblock \bibinfo{title}{Spinn: Sparse, physics-based, and partially
  interpretable neural networks for pdes}.
\newblock \bibinfo{journal}{Journal of Computational Physics}
  \bibinfo{year}{2021};\bibinfo{volume}{445}:\bibinfo{pages}{110600}.
\bibitem[{Botelho et~al.(2020)Botelho, Joshi, Khara, Sarkar, Hegde, Adavani
  et~al.}]{botelho2020deep}
\bibinfo{author}{Botelho\xfnm[ S]}, \bibinfo{author}{Joshi\xfnm[ A]},
  \bibinfo{author}{Khara\xfnm[ B]}, \bibinfo{author}{Sarkar\xfnm[ S]},
  \bibinfo{author}{Hegde\xfnm[ C]}, \bibinfo{author}{Adavani\xfnm[ S]}, et~al.
\newblock \bibinfo{title}{Deep generative models that solve pdes: Distributed
  computing for training large data-free models}.
\newblock \bibinfo{journal}{arXiv preprint arXiv:200712792}
  \bibinfo{year}{2020};.
\bibitem[{Sitzmann et~al.(2020)Sitzmann, Martel, Bergman, Lindell and
  Wetzstein}]{sitzmann2020implicit}
\bibinfo{author}{Sitzmann\xfnm[ V]}, \bibinfo{author}{Martel\xfnm[ J]},
  \bibinfo{author}{Bergman\xfnm[ A]}, \bibinfo{author}{Lindell\xfnm[ D]},
  \bibinfo{author}{Wetzstein\xfnm[ G]}.
\newblock \bibinfo{title}{Implicit neural representations with periodic
  activation functions}.
\newblock \bibinfo{journal}{Advances in Neural Information Processing Systems}
  \bibinfo{year}{2020};\bibinfo{volume}{33}.
\bibitem[{Mishra and Rusch(2021)}]{mishra2021enhancing}
\bibinfo{author}{Mishra\xfnm[ S]}, \bibinfo{author}{Rusch\xfnm[ TK]}.
\newblock \bibinfo{title}{Enhancing accuracy of deep learning algorithms by
  training with low-discrepancy sequences}.
\newblock \bibinfo{journal}{SIAM Journal on Numerical Analysis}
  \bibinfo{year}{2021};\bibinfo{volume}{59}(\bibinfo{number}{3}):\bibinfo{pages}{1811--1834}.
\bibitem[{Zhu et~al.(2019)Zhu, Zabaras, Koutsourelakis and
  Perdikaris}]{zhu2019physics}
\bibinfo{author}{Zhu\xfnm[ Y]}, \bibinfo{author}{Zabaras\xfnm[ N]},
  \bibinfo{author}{Koutsourelakis\xfnm[ PS]}, \bibinfo{author}{Perdikaris\xfnm[
  P]}.
\newblock \bibinfo{title}{Physics-constrained deep learning for
  high-dimensional surrogate modeling and uncertainty quantification without
  labeled data}.
\newblock \bibinfo{journal}{Journal of Computational Physics}
  \bibinfo{year}{2019};\bibinfo{volume}{394}:\bibinfo{pages}{56--81}.
\bibitem[{Wen et~al.(2021)Wen, Li, Azizzadenesheli, Anandkumar and
  Benson}]{wen2021u}
\bibinfo{author}{Wen\xfnm[ G]}, \bibinfo{author}{Li\xfnm[ Z]},
  \bibinfo{author}{Azizzadenesheli\xfnm[ K]}, \bibinfo{author}{Anandkumar\xfnm[
  A]}, \bibinfo{author}{Benson\xfnm[ SM]}.
\newblock \bibinfo{title}{U-fno--an enhanced fourier neural operator based-deep
  learning model for multiphase flow}.
\newblock \bibinfo{journal}{arXiv preprint arXiv:210903697}
  \bibinfo{year}{2021};.
\bibitem[{Ranade et~al.(2021)Ranade, Hill and
  Pathak}]{ranade2021discretizationnet}
\bibinfo{author}{Ranade\xfnm[ R]}, \bibinfo{author}{Hill\xfnm[ C]},
  \bibinfo{author}{Pathak\xfnm[ J]}.
\newblock \bibinfo{title}{Discretizationnet: A machine-learning based solver
  for navier--stokes equations using finite volume discretization}.
\newblock \bibinfo{journal}{Computer Methods in Applied Mechanics and
  Engineering}
  \bibinfo{year}{2021};\bibinfo{volume}{378}:\bibinfo{pages}{113722}.
\bibitem[{Brenner and Scott(2007)}]{brenner2007mathematical}
\bibinfo{author}{Brenner\xfnm[ S]}, \bibinfo{author}{Scott\xfnm[ R]}.
\newblock \bibinfo{title}{The mathematical theory of finite element methods};
  vol.~\bibinfo{volume}{15}.
\newblock \bibinfo{publisher}{Springer Science \& Business Media};
  \bibinfo{year}{2007}.
\bibitem[{Larson and Bengzon(2013)}]{larson2013finite}
\bibinfo{author}{Larson\xfnm[ MG]}, \bibinfo{author}{Bengzon\xfnm[ F]}.
\newblock \bibinfo{title}{The finite element method: theory, implementation,
  and applications}; vol.~\bibinfo{volume}{10}.
\newblock \bibinfo{publisher}{Springer Science \& Business Media};
  \bibinfo{year}{2013}.
\bibitem[{Shin et~al.(2020)Shin, Zhang and Karniadakis}]{shin2020error}
\bibinfo{author}{Shin\xfnm[ Y]}, \bibinfo{author}{Zhang\xfnm[ Z]},
  \bibinfo{author}{Karniadakis\xfnm[ GE]}.
\newblock \bibinfo{title}{Error estimates of residual minimization using neural
  networks for linear pdes}.
\newblock \bibinfo{journal}{arXiv preprint arXiv:201008019}
  \bibinfo{year}{2020};.
\bibitem[{Mishra and Molinaro(2020)}]{mishra2020estimates}
\bibinfo{author}{Mishra\xfnm[ S]}, \bibinfo{author}{Molinaro\xfnm[ R]}.
\newblock \bibinfo{title}{Estimates on the generalization error of physics
  informed neural networks (pinns) for approximating pdes}.
\newblock \bibinfo{journal}{arXiv preprint arXiv:200616144}
  \bibinfo{year}{2020};.
\bibitem[{Jiao et~al.(2021)Jiao, Lai, Luo, Wang and Yang}]{jiao2021error}
\bibinfo{author}{Jiao\xfnm[ Y]}, \bibinfo{author}{Lai\xfnm[ Y]},
  \bibinfo{author}{Luo\xfnm[ Y]}, \bibinfo{author}{Wang\xfnm[ Y]},
  \bibinfo{author}{Yang\xfnm[ Y]}.
\newblock \bibinfo{title}{Error analysis of deep ritz methods for elliptic
  equations}.
\newblock \bibinfo{journal}{arXiv preprint arXiv:210714478}
  \bibinfo{year}{2021};.
\bibitem[{Evans(1998)}]{evans1998partial}
\bibinfo{author}{Evans\xfnm[ LC]}.
\newblock \bibinfo{title}{Partial differential equations}.
\newblock \bibinfo{journal}{Graduate studies in mathematics}
  \bibinfo{year}{1998};\bibinfo{volume}{19}(\bibinfo{number}{4}):\bibinfo{pages}{7}.
\bibitem[{Reddy(2010)}]{reddy2010introduction}
\bibinfo{author}{Reddy\xfnm[ J]}.
\newblock \bibinfo{title}{An introduction to the finite element method}; vol.
  \bibinfo{volume}{1221}.
\newblock \bibinfo{publisher}{McGraw-Hill New York}; \bibinfo{year}{2010}.
\bibitem[{Yu et~al.(2017)}]{yu2017deep}
\bibinfo{author}{Yu\xfnm[ B]}, et~al.
\newblock \bibinfo{title}{The deep ritz method: a deep learning-based numerical
  algorithm for solving variational problems}.
\newblock \bibinfo{journal}{arXiv preprint arXiv:171000211}
  \bibinfo{year}{2017};.
\bibitem[{Lee and Kang(1990)}]{lee1990neural}
\bibinfo{author}{Lee\xfnm[ H]}, \bibinfo{author}{Kang\xfnm[ IS]}.
\newblock \bibinfo{title}{Neural algorithm for solving differential equations}.
\newblock \bibinfo{journal}{Journal of Computational Physics}
  \bibinfo{year}{1990};\bibinfo{volume}{91}(\bibinfo{number}{1}):\bibinfo{pages}{110--131}.
\bibitem[{Lagaris et~al.(1998)Lagaris, Likas and
  Fotiadis}]{lagaris1998artificial}
\bibinfo{author}{Lagaris\xfnm[ IE]}, \bibinfo{author}{Likas\xfnm[ A]},
  \bibinfo{author}{Fotiadis\xfnm[ DI]}.
\newblock \bibinfo{title}{Artificial neural networks for solving ordinary and
  partial differential equations}.
\newblock \bibinfo{journal}{IEEE transactions on neural networks}
  \bibinfo{year}{1998};\bibinfo{volume}{9}(\bibinfo{number}{5}):\bibinfo{pages}{987--1000}.
\bibitem[{Malek and Beidokhti(2006)}]{malek2006numerical}
\bibinfo{author}{Malek\xfnm[ A]}, \bibinfo{author}{Beidokhti\xfnm[ RS]}.
\newblock \bibinfo{title}{Numerical solution for high order differential
  equations using a hybrid neural network optimization method}.
\newblock \bibinfo{journal}{Applied Mathematics and Computation}
  \bibinfo{year}{2006};\bibinfo{volume}{183}(\bibinfo{number}{1}):\bibinfo{pages}{260--271}.
\bibitem[{Sukumar and Srivastava(2021)}]{sukumar2021exact}
\bibinfo{author}{Sukumar\xfnm[ N]}, \bibinfo{author}{Srivastava\xfnm[ A]}.
\newblock \bibinfo{title}{Exact imposition of boundary conditions with distance
  functions in physics-informed deep neural networks}.
\newblock \bibinfo{journal}{arXiv preprint arXiv:210408426}
  \bibinfo{year}{2021};.
\bibitem[{Lagaris et~al.(2000)Lagaris, Likas and
  Papageorgiou}]{lagaris2000neural}
\bibinfo{author}{Lagaris\xfnm[ IE]}, \bibinfo{author}{Likas\xfnm[ AC]},
  \bibinfo{author}{Papageorgiou\xfnm[ DG]}.
\newblock \bibinfo{title}{Neural-network methods for boundary value problems
  with irregular boundaries}.
\newblock \bibinfo{journal}{IEEE Transactions on Neural Networks}
  \bibinfo{year}{2000};\bibinfo{volume}{11}(\bibinfo{number}{5}):\bibinfo{pages}{1041--1049}.
\bibitem[{Van~der Meer et~al.(2020)Van~der Meer, Oosterlee and
  Borovykh}]{van2020optimally}
\bibinfo{author}{Van~der Meer\xfnm[ R]}, \bibinfo{author}{Oosterlee\xfnm[ C]},
  \bibinfo{author}{Borovykh\xfnm[ A]}.
\newblock \bibinfo{title}{Optimally weighted loss functions for solving pdes
  with neural networks}.
\newblock \bibinfo{journal}{arXiv preprint arXiv:200206269}
  \bibinfo{year}{2020};.
\bibitem[{Wang et~al.(2020)Wang, Teng and Perdikaris}]{wang2020understanding}
\bibinfo{author}{Wang\xfnm[ S]}, \bibinfo{author}{Teng\xfnm[ Y]},
  \bibinfo{author}{Perdikaris\xfnm[ P]}.
\newblock \bibinfo{title}{Understanding and mitigating gradient pathologies in
  physics-informed neural networks}.
\newblock \bibinfo{journal}{arXiv preprint arXiv:200104536}
  \bibinfo{year}{2020};.
\bibitem[{Hennigh et~al.(2021)Hennigh, Narasimhan, Nabian, Subramaniam,
  Tangsali, Fang et~al.}]{hennigh2021nvidia}
\bibinfo{author}{Hennigh\xfnm[ O]}, \bibinfo{author}{Narasimhan\xfnm[ S]},
  \bibinfo{author}{Nabian\xfnm[ MA]}, \bibinfo{author}{Subramaniam\xfnm[ A]},
  \bibinfo{author}{Tangsali\xfnm[ K]}, \bibinfo{author}{Fang\xfnm[ Z]}, et~al.
\newblock \bibinfo{title}{Nvidia simnet: An ai-accelerated multi-physics
  simulation framework}.
\newblock In: \bibinfo{booktitle}{International Conference on Computational
  Science}. \bibinfo{organization}{Springer}; \bibinfo{year}{2021}, p.
  \bibinfo{pages}{447--461}.
\bibitem[{Wang and Perdikaris(2021)}]{wang2021long}
\bibinfo{author}{Wang\xfnm[ S]}, \bibinfo{author}{Perdikaris\xfnm[ P]}.
\newblock \bibinfo{title}{Long-time integration of parametric evolution
  equations with physics-informed deeponets}.
\newblock \bibinfo{journal}{arXiv preprint arXiv:210605384}
  \bibinfo{year}{2021};.
\bibitem[{Paganini et~al.(2018)Paganini, de~Oliveira and
  Nachman}]{paganini2018calogan}
\bibinfo{author}{Paganini\xfnm[ M]}, \bibinfo{author}{de~Oliveira\xfnm[ L]},
  \bibinfo{author}{Nachman\xfnm[ B]}.
\newblock \bibinfo{title}{Calogan: Simulating 3d high energy particle showers
  in multilayer electromagnetic calorimeters with generative adversarial
  networks}.
\newblock \bibinfo{journal}{Physical Review D}
  \bibinfo{year}{2018};\bibinfo{volume}{97}(\bibinfo{number}{1}):\bibinfo{pages}{014021}.
\bibitem[{Krishnapriyan et~al.(2021)Krishnapriyan, Gholami, Zhe, Kirby and
  Mahoney}]{krishnapriyan2021characterizing}
\bibinfo{author}{Krishnapriyan\xfnm[ AS]}, \bibinfo{author}{Gholami\xfnm[ A]},
  \bibinfo{author}{Zhe\xfnm[ S]}, \bibinfo{author}{Kirby\xfnm[ RM]},
  \bibinfo{author}{Mahoney\xfnm[ MW]}.
\newblock \bibinfo{title}{Characterizing possible failure modes in
  physics-informed neural networks}.
\newblock \bibinfo{journal}{arXiv preprint arXiv:210901050}
  \bibinfo{year}{2021};.
\bibitem[{Wang et~al.(2021)Wang, Teng and Perdikaris}]{wang2021understanding}
\bibinfo{author}{Wang\xfnm[ S]}, \bibinfo{author}{Teng\xfnm[ Y]},
  \bibinfo{author}{Perdikaris\xfnm[ P]}.
\newblock \bibinfo{title}{Understanding and mitigating gradient flow
  pathologies in physics-informed neural networks}.
\newblock \bibinfo{journal}{SIAM Journal on Scientific Computing}
  \bibinfo{year}{2021};\bibinfo{volume}{43}(\bibinfo{number}{5}):\bibinfo{pages}{A3055--A3081}.
\bibitem[{Fox(1987)}]{fox1987introduction}
\bibinfo{author}{Fox\xfnm[ C]}.
\newblock \bibinfo{title}{An introduction to the calculus of variations}.
\newblock \bibinfo{publisher}{Courier Corporation}; \bibinfo{year}{1987}.
\bibitem[{Kingma and Ba(2014)}]{kingma2014adam}
\bibinfo{author}{Kingma\xfnm[ DP]}, \bibinfo{author}{Ba\xfnm[ J]}.
\newblock \bibinfo{title}{Adam: A method for stochastic optimization}.
\newblock \bibinfo{journal}{arXiv preprint arXiv:14126980}
  \bibinfo{year}{2014};.
\bibitem[{Ronneberger et~al.(2015)Ronneberger, Fischer and
  Brox}]{ronneberger2015u}
\bibinfo{author}{Ronneberger\xfnm[ O]}, \bibinfo{author}{Fischer\xfnm[ P]},
  \bibinfo{author}{Brox\xfnm[ T]}.
\newblock \bibinfo{title}{{U-net}: Convolutional networks for biomedical image
  segmentation}.
\newblock In: \bibinfo{booktitle}{International Conference on Medical image
  computing and computer-assisted intervention}.
  \bibinfo{organization}{Springer}; \bibinfo{year}{2015}, p.
  \bibinfo{pages}{234--241}.
\bibitem[{{\c{C}}i{\c{c}}ek et~al.(2016){\c{C}}i{\c{c}}ek, Abdulkadir,
  Lienkamp, Brox and Ronneberger}]{cciccek20163d}
\bibinfo{author}{{\c{C}}i{\c{c}}ek\xfnm[ {\"O}]},
  \bibinfo{author}{Abdulkadir\xfnm[ A]}, \bibinfo{author}{Lienkamp\xfnm[ SS]},
  \bibinfo{author}{Brox\xfnm[ T]}, \bibinfo{author}{Ronneberger\xfnm[ O]}.
\newblock \bibinfo{title}{{3D U-Net}: learning dense volumetric segmentation
  from sparse annotation}.
\newblock In: \bibinfo{booktitle}{International conference on medical image
  computing and computer-assisted intervention}.
  \bibinfo{organization}{Springer}; \bibinfo{year}{2016}, p.
  \bibinfo{pages}{424--432}.
\bibitem[{Paszke et~al.(2019)Paszke, Gross, Massa, Lerer, Bradbury, Chanan
  et~al.}]{paszke2019pytorch}
\bibinfo{author}{Paszke\xfnm[ A]}, \bibinfo{author}{Gross\xfnm[ S]},
  \bibinfo{author}{Massa\xfnm[ F]}, \bibinfo{author}{Lerer\xfnm[ A]},
  \bibinfo{author}{Bradbury\xfnm[ J]}, \bibinfo{author}{Chanan\xfnm[ G]},
  et~al.
\newblock \bibinfo{title}{Pytorch: An imperative style, high-performance deep
  learning library}.
\newblock \bibinfo{journal}{Advances in neural information processing systems}
  \bibinfo{year}{2019};\bibinfo{volume}{32}:\bibinfo{pages}{8026--8037}.
\bibitem[{Bubeck et~al.(2020)Bubeck, Eldan, Lee and
  Mikulincer}]{bubeck2020-memorization}
\bibinfo{author}{Bubeck\xfnm[ S]}, \bibinfo{author}{Eldan\xfnm[ R]},
  \bibinfo{author}{Lee\xfnm[ YT]}, \bibinfo{author}{Mikulincer\xfnm[ D]}.
\newblock \bibinfo{title}{Network size and weights size for memorization with
  two-layers neural networks}.
\newblock \bibinfo{journal}{arXiv preprint arXiv:200602855}
  \bibinfo{year}{2020};.
\bibitem[{LeCun et~al.(1998)LeCun, Bottou, Bengio and
  Haffner}]{lecun1998gradient}
\bibinfo{author}{LeCun\xfnm[ Y]}, \bibinfo{author}{Bottou\xfnm[ L]},
  \bibinfo{author}{Bengio\xfnm[ Y]}, \bibinfo{author}{Haffner\xfnm[ P]}.
\newblock \bibinfo{title}{Gradient-based learning applied to document
  recognition}.
\newblock \bibinfo{journal}{Proceedings of the IEEE}
  \bibinfo{year}{1998};\bibinfo{volume}{86}(\bibinfo{number}{11}):\bibinfo{pages}{2278--2324}.
\bibitem[{Oden and Reddy(2012)}]{oden2012introduction}
\bibinfo{author}{Oden\xfnm[ JT]}, \bibinfo{author}{Reddy\xfnm[ JN]}.
\newblock \bibinfo{title}{An introduction to the mathematical theory of finite
  elements}.
\newblock \bibinfo{publisher}{Courier Corporation}; \bibinfo{year}{2012}.
\bibitem[{Balu et~al.(2021)Balu, Botelho, Khara, Rao, Hegde, Sarkar
  et~al.}]{balu2021distributed}
\bibinfo{author}{Balu\xfnm[ A]}, \bibinfo{author}{Botelho\xfnm[ S]},
  \bibinfo{author}{Khara\xfnm[ B]}, \bibinfo{author}{Rao\xfnm[ V]},
  \bibinfo{author}{Hegde\xfnm[ C]}, \bibinfo{author}{Sarkar\xfnm[ S]}, et~al.
\newblock \bibinfo{title}{Distributed multigrid neural solvers on megavoxel
  domains}.
\newblock \bibinfo{journal}{arXiv preprint arXiv:210414538}
  \bibinfo{year}{2021};.
\bibitem[{Ghanem and Spanos(2003)}]{ghanem2003stochastic}
\bibinfo{author}{Ghanem\xfnm[ RG]}, \bibinfo{author}{Spanos\xfnm[ PD]}.
\newblock \bibinfo{title}{Stochastic finite elements: a spectral approach}.
\newblock \bibinfo{publisher}{Courier Corporation}; \bibinfo{year}{2003}.

\end{thebibliography}

\clearpage
\appendix
\appendix
\section{Representation of random diffusivity}
\label{app:stochastic-representation-description}
With $ \omega $ taken from the sample space $ \Omega $, the diffusivity / permeability $ \diffusivity $  can be written as an exponential of a random quantity $ Z $:
\begin{align}\label{def:log-perm}
\diffusivity = \exp \left( Z(\xvec; \omega) \right).
\end{align}
We assume that $ Z $ is square integrable, i.e., $ \mathbb{E}\left[ |Z(\xvec; \omega)|^2 \right] < \infty $. Then we can write $ Z $ using the Karhunen-Loeve expansion \cite{ghanem2003stochastic}, as:
\begin{align}
Z(\xvec; \omega) = \bar{Z}(\xvec) + \sum_{i = 1}^{\infty} \sqrt{\lambda_i}\phi_i(\xvec)\psi_i(\omega),
\end{align}
where $\bar{Z}(\xvec) = \mathbb{E}(Z(\xvec, \omega)) $, and $ \psi_i(\omega) $ are independent random variables with zero mean and unit variance. $ \lambda_i $ and $ \phi_i(\xvec) $ are the eigenvalues and eigenvectors corresponding to the Fredholm equation:
\begin{align}\label{eq:fredholm-integral}
\int_D C_Z(\mvec{s},\mvec{t})\phi(s)ds = \lambda\phi(\mvec{t}),
\end{align}
where $ C_Z(s,t) $ is the covariance kernel given by,
\begin{align}
C_Z(\mvec{s}, \mvec{t}) = \sigma_z^2 \exp \left( -\left[ \frac{s_1-t_1}{\eta_1}+\frac{s_2-t_2}{\eta_2}+\frac{s_3-t_3}{\eta_3} \right] \right),
\end{align}
where $ \eta_i $ is the correlation length in the $ x_i $ coordinate. This particular form of the covariance kernel is separable in the three coordinates, thus the eigenvalues and the eigenfunctions of the multi-dimensional case can be obtained by combining the eigenvalues and eigenfunctions of the one-dimensional covariance kernel given by:
\begin{align}\label{eq:covariance-1d}
C_Z(s,t) = \sigma_Z^2\exp \left(-\frac{s-t}{\eta}\right),
\end{align}
where $ \sigma_Z $ is the variance and $ \eta $ is the correlation length in one-dimension. 

\eqnref{def:log-perm} can then be written as,
\begin{align}\label{def:poisson-nu-harmonic-expansion}
\tilde{\nu}(\mvec{x}; \omega) 
& = \exp\left(\sum_{i = 1}^{m} a_i \sqrt{\lambda_{xi}\lambda_{yi}} \phi_i(x)\psi_i(y) \right)
\end{align}
where $a_i$ is an $m$-dimensional parameter, $ \lambda_x $ and $ \lambda_y $ are vectors of real numbers arranged in the order of monotonically decreasing values; and $ \phi $ and $ \psi $ are functions of $ x $ and $ y $ respectively. $ \lambda_{xi} $ is calculated as:
\begin{align}
	\lambda_{xi} = \frac{2\eta\sigma_x}{(1+\eta^2 \omega_x^2)},
\end{align}
where $ \omega_x $ is the solution to the system of transcendental equations obtained after differentiating \eqnref{eq:fredholm-integral} with respect to $ \mvec{t} $. $ \lambda_{yi} $ are calculated similarly. $ \phi_i(x) $ are given by:
\begin{align}
	\phi_i(x) = \frac{a_i}{2}\cos(a_i x) + \sin(a_i x)
\end{align}
and $ \psi_i(y) $ are calculated similarly. We take $ m = 6 $ and assume that each $ a_i $ is uniformly distributed in $ [-\sqrt{3},\sqrt{3}] $, thus $ \mvec{a} \in [-\sqrt{3},\sqrt{3}]^6 $. The input diffusivity $ \diffusivity $ in all the examples in \cref{Sections}{sec:results-poisson-kl-problem} and \ref{sec:results-poisson-kl-problem-3d} are calculate by choosing the 6-dimensional coefficient $ \mvec{a} $ from $ [-\sqrt{3},\sqrt{3}]^6 $.

\clearpage
\section{Further discussion on convergence studies}\label{app:error-convergence}
\subsection{Discussion on the role of keeping \texorpdfstring{$\eCap$ and $ \eOpt $}{} low}
If we choose a fixed network architecture and use it to solve \eqnref{eq:poisson-manufactured-pde} across different $ h $-levels, then the errors do not necessarily decrease with decreasing $ h $. As shown in \figref{fig:convergence-plot-non-parametric-clone}, the errors actually increase when $ h>2^{-5} $. This reason for this behaviour is that, when $ h $ becomes low, the number of discrete unknowns in the mesh (i.e., $ U_i $'s in \eqnref{eq:def:fem-function-approximation-0}) increases. In fact, in this case, the number of basis functions / unknowns, $ N $  is exactly equal to $ \oneOver{h^2} $. As $ h $ decreases, the size of the space $ \spaceVh $ increases. But since the network remains the same, the discrete function space $ \spaceVh $ does not remain a subspace of $ \spaceVhtheta $ anymore. This network function class also needs to get bigger to accommodate all the possible functions at the lower values of $ h $. \figref{fig:convergence-plot-non-parametric-clone} also shows the errors obtained when the network is indeed enhanced to make $ \spaceVhtheta\supset\spaceV $ (this is a clone of the errors plotted in \figref{fig:convergence-plot-non-parametric}).
\begin{figure}[!h]
	\centering
	\begin{tikzpicture}
	\begin{loglogaxis}[
	width=0.5\linewidth, 
	height=0.5\linewidth,
	xlabel=$h$, 
	ylabel=$\norm{\uNeural - u}_{L^2(\spatialD)}$,
	legend style={at={(0,1.02)},anchor=south west,legend columns=1}, 
	x tick label style={rotate=0,anchor=north}, 
	xtick={0.00390625, 0.0078125, 0.015625 , 0.03125  , 0.0625, 0.125},
	xticklabels={$2^{-8}$,$2^{-7}$,$2^{-6}$,$2^{-5}$,$2^{-4}$,$2^{-3}$},
	legend cell align={left},
	]
	\addplot+[thick,color=red,mark options={fill=red}]
	table[x expr={1.0/((\thisrow{elm}))},y expr={\thisrow{diffnet_with_network_capacity_saturated}},col sep=comma]{results_data/poisson-convergence/errors_linear.txt};
	\addplot+ [dashed,color=black,mark options={fill=black}]
	table[x expr={1.0/((\thisrow{elm}))},y expr={\thisrow{diffnet_with_network_capacity_enhanced}},col sep=comma]{results_data/poisson-convergence/errors_linear.txt};
	\logLogSlopeTriangle{0.8}{0.2}{0.2}{2}{blue};
	\legend{\footnotesize \neufenet{} + Network (capacity unchanged), \footnotesize \neufenet{} + Network (capacity enhanced)}
	\end{loglogaxis}
	\end{tikzpicture}
	\caption{Convergence of the error in $L^2$ norm for the Poissson equation with analytical solution $u =\sin{(\pi x)}\sin{(\pi y}$).}
	\label{fig:convergence-plot-non-parametric-clone}
\end{figure}

\clearpage
\section{Solutions to the parametric Poisson's equation}
\subsection{Randomly selected examples}
\begin{table}[!htb]
	\centering
	\small
    \setlength\extrarowheight{4pt}
	\csvreader[
	tabular=r|rrrrrr|rrrr,
	table head=\hline \# &{$ a_1 $} & {$ a_2 $} &{$ a_3 $} & {$ a_4 $} & {$ a_5 $} & {$ a_6 $} & {$\normL[2]{u_{\netparams}}{D}$} & {$\normL[2]{u^h}{D}$} & {$\normL[2]{u_{\netparams}-u^h}{D}$} & {$\frac{\normL[2]{u_{\netparams}-u^h}{D}}{\normL[2]{u^h}{D}} $} \\ 
	\hline,
	late after last line=\\ \hline 
	]{
		results_data/kl-stochastic/anecdotal-query/norms.txt
	}{}{\csvlinetotablerow}
	\caption{Norm of solution fields for a few randomly selected }
	\label{tab:kl-stochastic-anecdotal-contours}
\end{table}
\begin{figure}[!htb]
	\centering
	\includegraphics[trim=140 150 110 170,clip,width=0.95\linewidth]{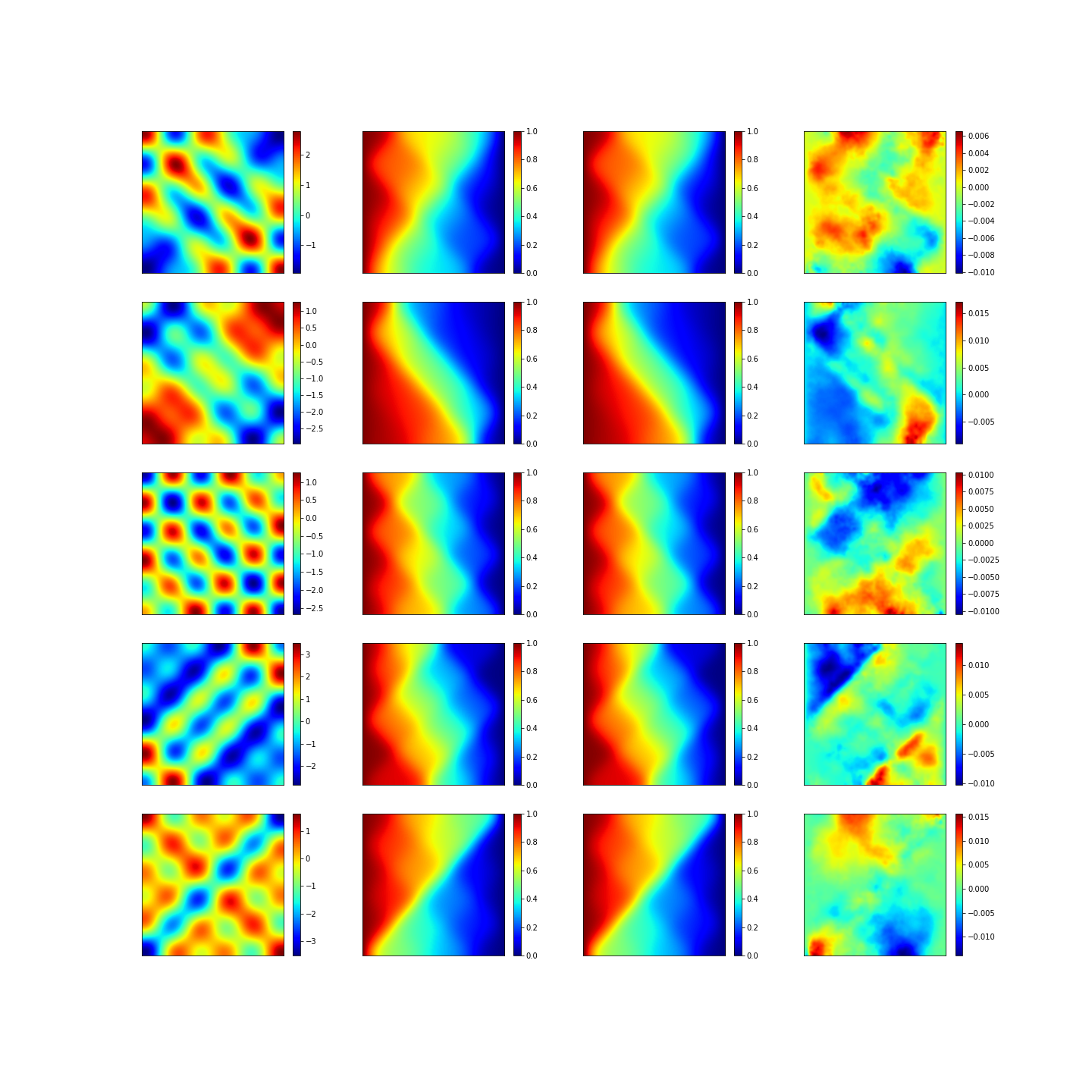}
	\caption{Contours for the randomly selected examples presented in \tabref{tab:kl-stochastic-anecdotal-contours}: (left) $ \ln(\diffusivity) $, (mid-left) $ \uNeural $, (mid-right) $ \uDiscrete $ and (right) $ (\uNeural-\uDiscrete) $}
	\label{fig:poisson-kl-2d-anecdotal-contours}
\end{figure}
\clearpage

\subsection{Mean and standard-deviation fields}
\begin{table}[!htb]
	\centering
    \setlength\extrarowheight{4pt}
	\csvreader[
	tabular=ccccc,
	table head=\hline & {$ \norm{\uNeural}_{L^2(\spatialD)} $} & {$ \norm{\uDiscrete}_{L^2(\spatialD)} $} & {$ \norm{\uNeural-\uDiscrete}_{L^2(\spatialD)} $} & { $ \frac{\norm{\uNeural-\uDiscrete}_{L^2(\spatialD)}}{\norm{\uDiscrete}_{L^2(\spatialD)}} $} \\ 
	\hline,
	late after last line=\\ \hline 
	]{
		results_data/kl-stochastic/mean-sdev-norms.txt
	}{}{\csvlinetotablerow}
	\caption{Norm of the mean and standard-deviation fields }
	\label{tab:poisson-kl-stochastic-mean-sdev}
\end{table}
\begin{figure}[!htb]
	\centering
	\begin{minipage}{.8\textwidth}
		\centering
		{\includegraphics[trim=100 0 50 0, clip,width=0.9\linewidth]{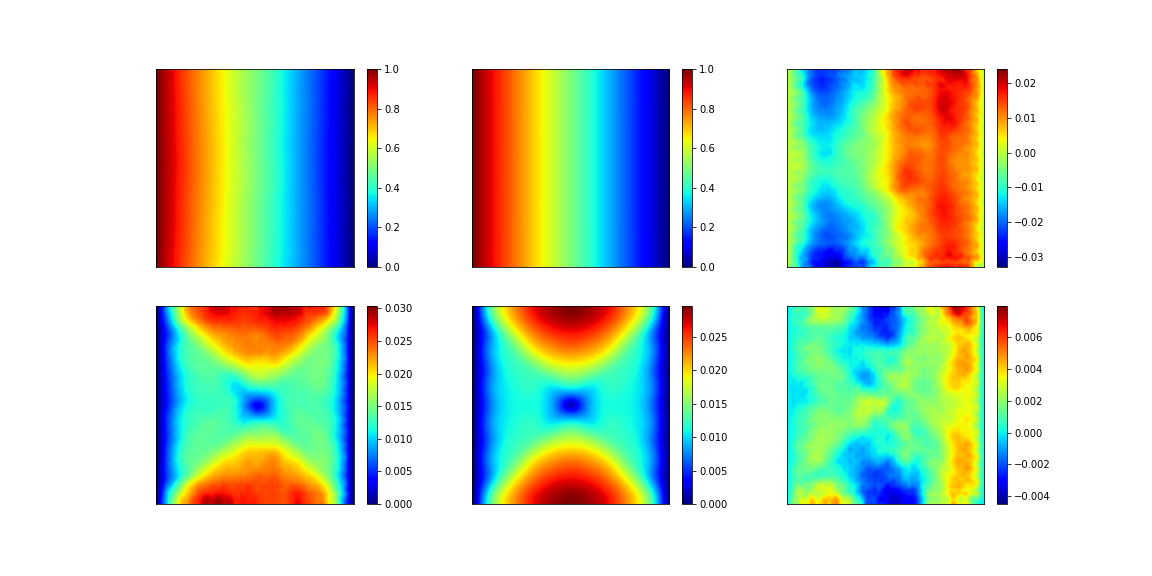}}
	\end{minipage}	
	\caption{(top) Mean  and (bottom) standard deviation fields: (left) \neufenet{} ($ \uNeural $), (mid) Conventional FEM ($ \uDiscrete $, (right) point-wise difference ($ \uNeural-\uDiscrete $)}
	\label{fig:poisson-kl-stochastic-mean-sdev}
\end{figure}

\clearpage

\end{document}